\documentclass[letterpaper, 10 pt, journal, twoside]{IEEEtran}

\IEEEoverridecommandlockouts                              %

\usepackage[utf8]{inputenc}
\usepackage[english]{babel}
\usepackage{graphicx}
\usepackage{amsmath,amssymb}
\usepackage{amsfonts}
\usepackage{upgreek}
\usepackage{color}
\usepackage{marginnote}
\usepackage[ruled,vlined]{algorithm2e}
\usepackage{multirow}
\usepackage{paralist}
\usepackage{wrapfig}
\usepackage{xcolor}
\usepackage{hyperref}

\usepackage{amsthm}
\usepackage{environ}
\usepackage[normalem]{ulem}
\usepackage{enumitem}
\usepackage{subcaption}
\usepackage{url}
\usepackage{hyperref}

\newtheorem{theorem}{Theorem}
\newtheorem{lemma}{Lemma}
\newtheorem{definition}{Definition}

\newtheorem*{remark}{Remark}

\usepackage{fancyvrb}
\usepackage{listings}
    \lstset{basicstyle=\ttfamily\small,
    escapeinside={||},
    mathescape=true}

\newcommand{\norm}[1]{\left\lVert#1\right\rVert}

\usepackage{scalerel,stackengine}
\stackMath

\newcommand{\poly}{\texttt{Poly}}

\newcommand{\polydim}[1]{\texttt{Row}(#1)}

\newcommand{\revise}[1]{#1}

\newcommand{\nnnum}[1]{\relax\ifmmode 
	{\mathbb #1}_{\geq 0} \else ${\mathbb #1}_{\geq 0}$
	\fi}
\newcommand{\npnum}[1]{\relax\ifmmode 
	{\mathbb #1}_{\leq 0} \else ${\mathbb #1}_{\leq 0}$
	\fi}
\newcommand{\pnum}[1]{\relax\ifmmode 
	{\mathbb #1}_{> 0} \else ${\mathbb #1}_{> 0}$
	\fi}
\newcommand{\nnum}[1]{\relax\ifmmode 
	{\mathbb #1}_{< 0} \else ${\mathbb #1}_{< 0}$
	\fi}
\newcommand{\plnum}[1]{\relax\ifmmode 
	{\mathbb #1}_{+} \else ${\mathbb #1}_{+}$
	\fi}
\newcommand{\nenum}[1]{\relax\ifmmode 
	{\mathbb #1}_{-} \else ${\mathbb #1}_{-}$
	\fi}

\newcommand{\reals}{{\mathbb{R}}}                    %
\newcommand{\nnreals}{{\nnnum R}}                    %

\renewcommand{\emptyset}{\varnothing}

\newcommand\reallywidehat[1]{%
\savestack{\tmpbox}{\stretchto{%
  \scaleto{%
    \scalerel*[\widthof{\ensuremath{#1}}]{\kern-.6pt\bigwedge\kern-.6pt}%
    {\rule[-\textheight/2]{1ex}{\textheight}}%
  }{\textheight}%
}{0.5ex}}%
\stackon[1pt]{#1}{\tmpbox}%
}

\newcommand{\sat}{\vDash}
\newcommand{\until}{\mathcal{U}}
\newcommand{\release}{\mathcal{R}}
\newcommand{\eventually}{\Diamond}
\newcommand{\always}{\square}

\newcommand{\RN}[1]{%
  \textup{\uppercase\expandafter{\romannumeral#1}}%
}

\NewEnviron{smallalign}{%
\scalebox{0.8}{
\begin{align*}
\BODY
\end{align*}}}

\newcommand{\extended}{}

\title{Multi-agent Motion Planning from Signal Temporal Logic Specifications}

\begin{document}

\author{Dawei Sun$^{1}$, Jingkai Chen$^{2}$, Sayan Mitra$^{1}$, and Chuchu Fan$^{2}$%
\thanks{Manuscript received: September 9, 2021; Revised December 12, 2021; Accepted January 6, 2022.}%
\thanks{This paper was recommended for publication by Editor M. Ani Hsieh upon evaluation of the Associate Editor and Reviewers' comments. Sun and Mitra were supported by research grants from the National Security Agency’s Science of Security (SoS) program and National Science Foundation’s Formal Methods in the Field (FMITF) program. Fan was supported by MIT-IBM Watson AI Lab.} %
\thanks{$^{1}$Dawei Sun and Sayan Mitra are with University of Illinois at Urbana-Champaign, Champaign, IL, 61820 USA.
        {\tt\footnotesize \{daweis2, mitras\}@illinois.edu}}%
\thanks{$^{2}$Jingkai Chen and Chuchu Fan are with Massachusetts Institute of Technology, Cambridge, MA, 02139 USA.
        {\tt\footnotesize \{jk\_chen, chuchu\}@mit.edu}}%
\thanks{Digital Object Identifier (DOI): see top of this page.}
}

\markboth{IEEE Robotics and Automation Letters. Preprint Version. Accepted January, 2022}
{Sun \MakeLowercase{\textit{et al.}}: Multi-agent Motion Planning from Signal Temporal Logic Specifications}

\maketitle

\begin{abstract}
We tackle the challenging problem of 
multi-agent cooperative motion planning  for complex tasks described using signal temporal logic (STL), where robots can have nonlinear and nonholonomic dynamics.
Existing methods in multi-agent motion planning, especially those based on discrete abstractions and model predictive control (MPC), suffer from limited scalability with respect to the complexity of the task, the size of the workspace, and the planning horizon. 
We present a method based on {\em timed waypoints\/} to address this issue. 
We show that timed waypoints can help abstract nonlinear behaviors of the system as safety envelopes around the reference path defined by those waypoints. Then the search for waypoints satisfying the STL specifications can be inductively encoded as a mixed-integer linear program.
The agents following the synthesized timed waypoints have their tasks automatically allocated, and are guaranteed to satisfy the STL specifications while avoiding collisions. 
We evaluate the algorithm on a wide variety of benchmarks.
Results show that it supports multi-agent planning from complex specification over long planning horizons, and significantly outperforms state-of-the-art abstraction-based and MPC-based motion planning methods. The implementation is available at \url{https://github.com/sundw2014/STLPlanning}.
\end{abstract}

\begin{IEEEkeywords}
Task and Motion Planning; Path Planning for Multiple Mobile Robots or Agents.
\end{IEEEkeywords}

\section{Introduction}
\label{section:intro}

\IEEEPARstart{T}{he} capability of performing automatic task and motion planning according to high-level specifications is what people usually expect from an intelligent and autonomous robotic system.
These high-level task specifications usually consist of temporal and logical rules and need cooperative solutions of multiple agents. It is not straightforward to directly derive a specific sequence of locations to visit for each agent from these high-level specifications.
Therefore, synthesizing correct-by-construction plans and control strategies from these complicated specifications has been an open problem, especially when the planning horizon is long and the robotic systems have complex dynamics~\cite{kress2018synthesis,majumdar2020multiparty}.
Fortunately, Temporal Logic (TL), especially Signal Temporal Logic (STL) provides a mathematically precise language for specifying tasks and rules over continuous signals with explicit time semantics~\cite{donze2010robust}.
Such a formal description of the task enables automatic control action synthesis.

Methods based on discrete abstractions and model predictive control (MPC) are two representative approaches for motion planning  from TL specifications. Abstraction-based methods discretize the state space and create an abstract bisimilar graph or automaton, on which the actual planning is performed. MPC-based methods discretize the trajectory with a fixed time step, and the states at each time step are viewed as the decision variables of an optimization problem. Although these methods have achieved  success in a wide range of applications, some obvious disadvantages prevents them from being widely adopted for solving realistic robotic planning problems: To use abstraction-based methods, one needs to construct the bisimilar graph, which heavily relies on domain expertise. Moreover, the number of abstracted states would potentially grow exponentially fast as the dimensionality of the state space increases and cause significant scalability issues. As for MPC-based methods, the number of time steps needed might be too large for long-horizon planning. In Sec.~\ref{sec:exp}, we compare the proposed method with those two methods.

In this paper, we propose a novel synthesis method which tackles the aforementioned challenges. Inspired by the method in~\cite{fan2020fast} where the authors use piece-wise linear (PWL) reference paths and tracking controllers to solve simple reach-avoid synthesis problems, we show that using PWL reference paths one can also handle more expressive STL specifications. A PWL path is defined by a sequence of time-stamped waypoints. Our method can automatically reason over the STL formula by recursively encoding constraints over the time-stamped waypoints according to the syntax of the STL formula. Moreover, we define the multi-agent STL, which can be used to specify tasks that need be completed cooperatively by a group of agents. Given such a multi-agent STL formula, our proposed method can automatically assign sub-tasks to each agent such that they cooperate efficiently without collision. Because the tracking error of the tracking controller is taken into account when encoding the constraints, it can be shown that any solution that satisfy our encoded constraints can give the desired PWL paths: Agents following the PWL reference paths are guaranteed to satisfy the given multi-agent STL specification and are collision-free. More importantly, our constraints are all linear because of the PWL structure. Therefore, we can find optimal solutions by solving  mixed-integer linear programming (MILP) problems, which can be effectively handled by off-the-shelf solvers such as Gurobi\footnote{https://www.gurobi.com/}.

We evaluate the proposed method on 8 benchmark synthesis problems with a variety of different scenarios. We compare with both abstraction-based and MPC-based methods~\cite{leung2020back,raman2014model}. Empirical results show that our method outperforms other state-of-the-art methods in terms of running time and quality of the planned paths, not to say that our method can handle much more general STL formulas. For example, our method is order of magnitude faster than \texttt{stlcg}~\cite{leung2020back}, which is MPC-based and solves the optimization problem with gradient decent. Also, we implement and compare with another MPC-based algorithm proposed in~\cite{raman2014model}. The MPC-based method failed in some cases due to the large number of decision variables, while our method successfully found a solution.

\subsection{Related work}
Robot motion planning is a large and active research area~\cite{lavalle2006planning,latombe2012robot,ghosh2020koord,ghosh2020cyphyhouse}, and planning from TL specifications has received significant attention~\cite{kress2018synthesis,tellex2020robots}.
Abstraction-based approaches have stood out as a systematic framework of finding control policies~\cite{kress2018synthesis, FainekosGKP09}.
However, the abstraction step of these approaches heavily relies on domain expertise and is hard to be automated. Among all the planning methods that can handle TL specifications, the closest to ours is the one proposed in~\cite{buyukkocak2021planning}, which discretizes the workspace into regions, constructs a graph with regions as nodes, and finally searches for a valid path on the graph.

\revise{Another class of synthesis approaches for STL is based on model predicative control (MPC), for example,~\cite{raman2014model,farahani2015robust,sadraddini2015robust,sahin2019multirobot,JoLeVaSaSeTrBe-ISRR-2019}.} In these approaches, a time-step is fixed and the decision variables of the optimization are just the state at each step. Both the dynamics and the STL specifications are encoded as constraints of the optimization problem. Thus, it is challenging to handle real-world robots with complicated dynamics. Another drawback of these approaches is that the number of time steps needed might be too large for long-horizon planning. \revise{The proposed approach tackles this problem by using time-stamped waypoints instead of a fixed time step, which is also similar to event-based control (e.g.,~\cite{gundana2021event}) in the sense that each waypoint can be viewed as an event and between two consecutive events the control input does not change. Similar idea has been studied in~\cite{yang2020continuous}, where the users use zeroth-order hold control, i.e., the control signal is held at a time instant (waypoint) for a variable interval. Different from the proposed approach, it uses control barrier functions to ensure satisfaction between timed waypoints.}

Sampling-based methods have also been used to solve planning problems for multi-agent systems and/or STL specifications. \revise{STyLuS$^*$~\cite{kantaros2020stylus} is a scalable algorithm for multi-agent optimal control with temporal logics.} \cite{vasile2020reactive} utilizes RRT to plan paths for long-term LTL goals with short-term reactive specifications. In~\cite{KARLSSON202015537}, the authors proposes the spatio-temporal RRT* algorithm which can handle STL specifications containing only ``always" operators. \revise{In~\cite{vasile2017sampling}, the authors extend the RRT$^*$ algorithm with biased space-time sampling and guided steering, and the algorithm is able to efficiently grow the RRT tree along the direction of increasing STL satisfaction.}

\section{Preliminaries and Problem Statement}
\label{section:problem}

Let $\mathbb{R}$ and $\mathbb{Z}^+$ be the real numbers and positive integers respectively. For a vector $x \in \mathbb{R}^n$, $x^{(i)}$ is its $i^\text{th}$ entry, $\norm{x}$ is its Euclidean norm, $\|x\|_1$ is its one-norm, and $B_\epsilon(x) := \{y \in \mathbb{R}^n\ | \ \norm{y-x} \leq \epsilon\}$ is the $\epsilon$-ball centered at $x$. Given a matrix $H \in \mathbb{R}^{n \times m}$ and a vector $b \in \mathbb{R}^n$, $\poly(H, b)$ denotes the convex polytope $\{x \in \mathbb{R}^m\ | \ H \cdot x \leq b\}$. $H^{(i)}$ is the $i^\text{th}$ row of $H$, and $\polydim{H}$ denotes the number of rows in $H$, which is also the number of faces of the polytope. For $N \in \mathbb{Z}^+$, denote $\{1, \cdots, N\}$ by $[N]$.

\subsection{STL for multi-agent specifications}
\label{sec:STL-def}
\label{section:problem:stl}
Let $\mathcal{W} := \reals^d$ be the workspace. Given a \revise{vector-valued} function $\mu$ defined on $\mathcal{W}$, an {\it atomic predicate} can be defined based on $\mu$ and is denoted by $\pi^\mu$. For a point $x \in \mathcal{W}$, we say that $x$ satisfies $\pi^\mu$ (written as $x \sat \pi^\mu$) iff. $\mu(x) \geq 0$. In this paper, we are only interested in atomic predicates that indicate whether or not a point is in a polytope. That is, $\mu$ is always of the form $\mu(x) = b - H \cdot x$. Then, $x \sat \pi^\mu$ iff. $x \in \poly{(H,b)}$. \revise{Also, $x \nvDash \pi^\mu$ iff. $x \notin \poly{(H,b)}$.} Atomic predicates only characterize standalone points in the workspace. However, we are more interested in predicates that can characterize trajectories. Let $p : \nnreals \mapsto \mathcal{W}$ be the position trajectory of a robot, which is a function of time. \revise{Let $(p, t)$ be the suffix of $p$ at $t$, i.e., $(p, t)(s) = p(s+t)$.} Next, STL is defined based on the atomic predicates.
\begin{definition}[Signal Temporal Logic (STL)] An STL formula is defined with the following syntax:
\begin{equation}
\begin{aligned}
\varphi::=\ & \pi^{\mu} | \neg \pi^{\mu} | \varphi_{1} \wedge \varphi_{2} | \varphi_{1} \vee \varphi_{2} \\
            & | \eventually_{[a, b]} \varphi | \always_{[a, b]} \varphi | \varphi_{1} \mathcal{U}_{[a, b]} \varphi_{2} | \varphi_{1} \release_{[a, b]} \varphi_{2}
\end{aligned}
\end{equation}
where $\varphi, \varphi_{1}, \varphi_{2}$ are STL formulas, and $0 \leq a \leq b < \infty$ denote time intervals. Here, the temporal operators $\eventually, \always, \until, \release$ are called ``eventually", ``always", ``until", and ``release" respectively. 
Formally, \revise{the validity of an STL formula with respect to a trajectory} $p : \nnreals \mapsto \mathcal{W}$ is defined as follows.
\begin{align*}
& ~p \vDash \varphi & \Leftrightarrow & \quad\left(p, 0\right) \vDash \varphi\\
& \left(p, t\right) \vDash \pi^{\mu} \quad & \Leftrightarrow & \quad \mu\left(p(t)\right) \geq 0\\
& \left(p, t\right) \vDash \neg \pi^{\mu} & \Leftrightarrow & \quad \revise{\left(p, t\right) \nvDash \pi^{\mu}}\\
& \left(p, t\right) \vDash \varphi_1 \wedge \varphi_2 \quad & \Leftrightarrow & \quad\left(p, t\right) \vDash \varphi_1 \wedge\left(p, t\right) \vDash \varphi_2\\
& \left(p, t\right) \vDash \varphi_1 \vee \varphi_2 \quad & \Leftrightarrow & \quad\left(p, t\right) \vDash \varphi_1 \vee \left(p, t\right) \vDash \varphi_2\\
& \left(p, t\right) \vDash \eventually_{[a, b]} \varphi \quad & \Leftrightarrow & \quad \exists t^{\prime} \in\left[t+a, t+b\right],\left(p, t^{\prime}\right) \vDash \varphi\\
& \left(p, t\right) \vDash \always_{[a, b]} \varphi \quad & \Leftrightarrow & \quad \forall t^{\prime} \in\left[t+a, t+b\right], \left(p, t^{\prime}\right) \vDash \varphi\\
& \left(p, t\right) \vDash \varphi_1 \mathcal{U}_{[a, b]} \varphi_2 & \Leftrightarrow & \quad\exists t^{\prime} \in\left[t+a, t+b\right], \left(p, t^{\prime}\right) \vDash \varphi_2
\\&&& \quad
\wedge \forall t^{\prime \prime} \in\left[t, t^{\prime}\right],\left(p, t^{\prime \prime}\right) \vDash \varphi_1\\
& \left(p, t\right) \vDash \varphi_1 \release_{[a, b]} \varphi_2 & \Leftrightarrow & \quad \forall t^{\prime} \in\left[t+a, t+b\right], \left(p, t^{\prime}\right) \vDash \varphi_2
\\ &&& \quad 
\vee \exists t^{\prime \prime} \in\left[t, t^{\prime}\right], \left(p, t^{\prime \prime}\right) \vDash \varphi_1
\end{align*}
\end{definition}

\revise{Please note that in the above syntax, negation can only be applied to atomic predicates. This is known as the {\it Negation Normal Form} and is not restrictive because any STL formula can be put in this form~\cite{lavalle2006planning}.}
The above definition of STL only characterizes a single trajectory $p$. Next, we define the multi-agent STL (MA-STL), which extends the notion of STL to cases where multiple trajectories are considered.
\begin{definition}[Multi-agent STL (MA-STL)]
An $N$-agent STL formula is defined recursively with the following syntax:
\[
\Psi ::= \pi^\varphi_i | \Psi_1 \wedge \Psi_2 | \Psi_1 \vee \Psi_2,
\]
where $\Psi_1, \Psi_2$ are $N$-agent STL formulas, and $\pi_i^\varphi$ assigns a single-agent STL $\varphi$ to agent $i$. Formally, the validity an MA-STL w.r.t. a group of trajectories $(p_1, \cdots, p_N)$ is as follows.
\begin{align*}
& (p_1, \cdots, p_N) \sat \pi^\varphi_i & \Leftrightarrow \quad & p_i \sat \varphi,\\
& (p_1, \cdots, p_N) \sat \Psi_1 \wedge \Psi_2 & \Leftrightarrow \quad & (p_1, \cdots, p_N) \sat \Psi_1
\\&&&
\text{ and } (p_1, \cdots, p_N) \sat \Psi_2,\\
& (p_1, \cdots, p_N) \sat \Psi_1 \vee \Psi_2 & \Leftrightarrow \quad & (p_1, \cdots, p_N) \sat \Psi_1
\\&&&
\text{ or\ \ \ } (p_1, \cdots, p_N) \sat \Psi_2.
\end{align*}
\label{def:MA-STL}
\vspace{-0.6cm}
\end{definition}

\begin{remark} Firstly, MA-STL enables implicit task assignment. For example, let $\{\mathcal{G}_i\}_{i=1}^{M}$ be $M$ goals, which are atomic predicates defining some polytopes in the workspace. The MA-STL formula $\Psi = \bigwedge_{j=1}^{M}\bigvee_{i=1}^{N} \pi_i^{\eventually_{[0,T]} \mathcal{G}_j}$ assigns tasks to the agents implicitly. That is, it does not assign specific tasks to each agent, but requires each goal to be visited by at least one agent. As will be shown in Sec.~\ref{sec:exp}, with the proposed planning algorithm, agents can figure out the optimal assignment automatically. \revise{Secondly, please also note that MA-STL is only a syntactic sugar in the sense that it is a subset of the STL formulas defined over the joint state space of the multi-agent system. In this subset, temporal operations can only be applied to a single agent at a time.}
\label{example:MA-STL}
\end{remark}

\subsection{Tracking controllers for the agents}
\revise{In practice, the robots used to complete a task usually have complicated and nonlinear dynamics, which makes it difficult to directly synthesize the correct control input for them. As a famous aphorism goes: ``all problems in computer science can be solved by another level of indirection", we exploit a separation of concerns that exists in the robot control synthesis problem so that complexity of specifications (tasks) and that of the dynamics can be dealt with separately. Specifically, we assume that a tracking controller is given for each agent such that it can track {\it any} reference path under {\it any} bounded disturbances with an known tracking error $\epsilon > 0$.\footnote{This is true when the reference paths satisfy the requirements of the controller, for example, the velocity is bounded.} That is, the distance between the actual position of the robot and the desired position on the reference path is always upper bounded by $\epsilon$. Many techniques can be used for obtaining such a tracking controller and the corresponding tracking error, for example, control Lyapunov functions~\cite{rodriguez2014trajectory} or control contraction metrics~\cite{Sun2020}. Here, the tracking controller is an abstraction (or the so-called ``indirection") layer that wraps the underlying dynamics such that the closed-loop system has a uniform behavior (characterized by the tracking error bound), and thus makes the design of motion planners easier.}

\revise{Obviously, controlled by the tracking controller, the actual trajectory of the agent will be in a tube centered at the reference path. If one can show that every trajectory in this tube satisfy the specification, then it can be guaranteed that the actual trajectory of the agent will satisfy the specification in the presence of any bounded disturbances. To this end, we define the robustness of trajectories.}

\begin{definition}[$\epsilon$-robust]
A group of trajectories $(p_1, \cdots, p_N)$ is said to be $\epsilon$-robust with respect to a property for some $\epsilon > 0$, if the property holds for all $(\hat{p}_1, \cdots, \hat{p}_N)$ satisfying $\sup_{t} \|\hat{p}_i(t) - p_i(t)\|_2 \leq \epsilon$, $\forall i \in [N]$.
\end{definition}

\subsection{The MA-STL motion planning problem}
\label{sec:stl-mamp}
Next, we define the multi-agent motion planning problem. Intuitively, the goal is to find a group of reference paths that are $\epsilon$-robust to a given MA-STL specification and free of inter-agent collisions.
Assume that $N$ agents are involved and $T$ is the time bound. Denote the size of the $i$-th agent by $s_i > 0$. That is, at position $p \in \mathcal{W}$, the agent is completely contained in a ball around it of radius $s_i$, i.e., $B_{s_i}(p)$. Then, the planning problem is defined as follows.

\begin{definition}[MA-STL Motion Planning]
A MA-STL motion planning problem is defined by a tuple
\begin{equation}
 \langle p^\texttt{init}_1, \cdots, p^\texttt{init}_N, \Psi \rangle,    \nonumber
\end{equation}
where $p^{\texttt{init}}_i \in \mathcal{W}$ is the initial position of agent $i$ and $\Psi$ is an MA-STL formula.
The problem is to find a group of reference paths $(p_1, \cdots, p_N)$ satisfying the following conditions:
\begin{enumerate}
    \item (Initial conditions) $p_i(0) = p^\texttt{init}_i$, $\forall i \in [N]$.
    \item (No inter-agent collisions) $\forall t \in [0, T]$, $\forall i, j \in [N]$ and $i \not = j$, $B_{s_i+\epsilon}(p_i(t)) \cap B_{s_j+\epsilon}(p_j(t)) = \emptyset$.
    \item (STL Satisfaction) $(p_1, \cdots, p_N)$ are $\epsilon$-robust w.r.t. $\Psi$.
\end{enumerate}
\label{def:STL-MAMP}
\end{definition}

Instead of searching for the reference paths among all possible functions of time, the proposed approach restricts its search space to piece-wise linear (PWL) paths. In the rest of the paper, we refer to the reference PWL path for agent $i$ as $S_i$. Formally, PWL paths are defined as follows.

\begin{definition}[Piece-wise Linear Path]
A piece-wise linear path $S_i$ in the workspace $\mathcal{W}$ is a function $S_i: \nnreals \rightarrow \mathcal{W}$ that maps a time instant $t$ to a position $S_i(t) \in \mathcal{W}$. It is constructed from a sequence of time-stamped waypoints $\{(t_{i,k}, p_{i,k})\}_{k=0}^{K_i}$ such that $S_i(t) = p_{i,k-1} + \frac{p_{i,k} - p_{i,k-1}}{t_{i,k} - t_{i,k-1}} (t - t_{i,k-1})$ for $t \in [t_{i,k-1},t_{i,k}]$. Here, $0=t_{i,0} \leq t_{i,1} \leq \cdots \leq t_{i,K}$ are the time stamps, and $(t_{i,k}, p_{i,k}) \in \nnreals \times \mathcal{W}$ is called the $k^\text{th}$ {\bf waypoint} of path $S_i$. The restriction of $S_i$ on the $k^\text{th}$ time interval $[t_{i,k-1},t_{i,k}]$ is called the $k^\text{th}$ {\bf segment} of $S_i$ and is denoted by $S^{(k)}_i$.
\end{definition}

\section{Solving the planning problems using MILP}
\label{sec:main_technique}
\noindent {\bf Overview of the approach.} We formulate the problem of finding PWL paths satisfying the specifications (inter-agent collision avoidance and STL satisfaction) as a constrained optimization problem and solve its mixed-integer linear programming (MILP) encoding using off-the-shelf optimizers such as Gurobi. The optimization problem is as follows:
\begin{equation}
\begin{gathered}
\min_{\mathcal{C}} \quad \mathcal{L}(\mathcal{C})\\
\textrm{s.t. $(S_1, \cdots, S_N)$ satisfy the conditions in Def.~\ref{def:STL-MAMP}.}
\end{gathered}
\label{eq:opt}
\end{equation}
where $\mathcal{C} := \bigcup_{i=1}^{N}\bigcup_{k=0}^{K_i}\{t_{i,k}, p_{i,k}\}$ is the set of variables representing the time stamps and waypoints on the PWL reference paths $(S_1, \cdots, S_N)$, and $\{K_i\}_{i=1}^{N}$ are constants. Here, $\mathcal{L}$ is a linear cost function. For example, one can minimize the total travel time, $\mathcal{L}(\mathcal{C}) = \sum_{i=1}^N t_{i,K_i}$. One can also minimize the makespan using $\mathcal{L}(\mathcal{C}) = T_{makespan}$ with extra linear constraints $T_{makespan} \geq t_{i,K_i}$, $i = 1, \cdots, N$.

In order to solve the above optimization problem as a MILP problem, the constraint in Eq.~\eqref{eq:opt} must be transformed into {\it a conjunction of linear constraints}, where each constraint is of the form $\texttt{LE} \geq 0$, and $\texttt{LE}$ is a linear expression of the decision variables. In addition to the aforementioned continuous variables $\mathcal{C}$, the decision variables of the MILP problem will contain another set of variables $\mathcal{B}$ that are the binary variables introduced when encoding logic relations. Also, this transformation must be sound, i.e., the feasible set defined by the linear constraints should be a subset of the original feasible set in Eq.~\eqref{eq:opt}.

In our approach, we first convert the original constraint in Eq.~\eqref{eq:opt} into a {\em linear constraint formula (LCF)}, which is a logic sentence of atomic formulas connected by conjunction or disjunction operators. The atomic formulas are of the form $\texttt{LE}_{\mathcal{C}} \geq 0$, where $\texttt{LE}_{\mathcal{C}}$ is a linear expression of the continuous variables $\mathcal{C}$. Then, the disjunctions in the LCF are eliminated using the big-M method, and binary variables $\mathcal{B}$ are introduced in this step. After eliminating all the disjunctions, the LCF becomes a conjunction of linear constraints.

This section is structured as follows. We first show how to transform the STL satisfaction and inter-agent collision avoidance into LCFs in Section~\ref{sec:STL} and Section~\ref{sec:mult} respectively. In Section~\ref{sec:overall}, we show the overall algorithm.

\subsection{Encoding MA-STL satisfactions with LCFs}
\label{sec:STL}
In this section, we consider the problem of encoding an MA-STL specification $\Psi$ with LCFs. Recall the syntax of MA-STL in Definition~\ref{def:MA-STL}. In an MA-STL formula $\Psi$, there are only conjunction and disjunction operations in addition to single-agent STL formulas. Thus, if we can find an LCF for each single-agent STL formula $\pi_i^{\varphi}$ in $\Psi$, then these LCFs can be directly combined with conjunctions and disjunctions to get the LCF for $\Psi$. Hence, in this section, we only consider the encoding of single-agent STL formulas, and the subscript $i$ is omitted for simplicity.
In conclusion, given an STL formula $\varphi$ and the tracking error $\epsilon > 0$, we aim at obtaining an LCF over the time-stamped waypoints $\{t_k, p_k\}_{k=0}^{K}$ such that if this LCF is true then the PWL path is $\epsilon$-robust to $\varphi$. 

Such an LCF can be constructed inductively. We will construct an LCF for each of the $K$ segments of $S$ and denote them by $z_i^\varphi$, $i = 0, 1, \cdots, K-1$. \revise{We want $z_i^\varphi$ to have a strong soundness property: $z_i^\varphi\text{ is true } \implies \forall t \in [t_{i}, t_{i+1}], (p, t) \sat \varphi$ for any trajectory $p$ deviating from $S$ up to the tracking error $\epsilon$, i.e., starting from any time point on the segment, $\varphi$ is satisfied robustly.} Once we obtain such LCFs for $\varphi$, $z_0^\varphi$ is just the LCF we ultimately want. Fortunately, LCFs with such a property can be encoded inductively starting from the atomic predicates and their negations.

For an atomic predicate $\varphi = \pi^\mu$ or its negation $\neg \pi^\mu$, where $\mu(x) := b - H \cdot x$, it is easy to construct $z_i^\varphi$ by shrinking or bloating $\poly{(H,b)}$ as follows.
{\small
\begin{multline}
 z_i^{\pi^\mu} = \bigwedge_{j=1}^{\polydim{H}}\Big( \big(b^{(j)} - H^{(j)} \cdot p_i - \epsilon \|H^{(j)}\|_2 \geq 0\big)\\ \wedge \big(b^{(j)} - H^{(j)} \cdot p_{i+1} - \epsilon \|H^{(j)}\|_2 \geq 0 \big) \Big);
\label{eq:atomic}
\end{multline}
\hrule
\begin{multline}
z_i^{\neg \pi^\mu} = \bigvee_{j=1}^{\polydim{H}}\Big( \big( H^{(j)} \cdot p_i - b^{(j)} - \epsilon \|H^{(j)}\|_2 \geq 0 \big)\\ \wedge \big( H^{(j)} \cdot p_{i+1} - b^{(j)} - \epsilon \|H^{(j)}\|_2 \geq 0 \big) \Big).
\label{eq:neg-atomic}
\end{multline}
}
\noindent It is easy to verify that the constructed $z$ formulas have the aforementioned soundness property. Intuitively, Eq.~\eqref{eq:atomic} requires both endpoints of the $i$-th segment of $S$ to be in the shrunk polytope, which is sufficient for the whole segment to be in the polytope. Eq.~\eqref{eq:neg-atomic} requires both endpoints to be on the outside of at least one face of the bloated polytope, which is also sufficient for the whole segment to be outside the polytope.

\revise{For non-atomic predicates, its $z$ formula will depend on the $z$ formulas of its sub-predicates, e.g., $z^{\always_{[a,b]}\varphi}$ depends on $z^{\varphi}$. The principle behind the design of the encoding rules is induction: we should guarantee that the aforementioned soundness property holds for the resulting $z$ formula if it holds for all the $z$ formulas of the sub-predicates (i.e., the $z$ formulas on the right-hand side of the below encoding rules).}

For conjunctions and disjunctions, it is simply
$z_i^{\varphi_1 \wedge \varphi_2} = z_i^{\varphi_1} \wedge z_i^{\varphi_2}$;
$z_i^{\varphi_1 \vee \varphi_2} = z_i^{\varphi_1} \vee z_i^{\varphi_2}$.

Temporal operators are handled as follows.

\begin{equation}
\small
{z_i^{\always_{[a,b]}\varphi}} = \bigwedge_{j=0}^{K-1}\left([t_{j}, t_{j+1}] \cap [t_{i}+a, t_{i+1}+b] \neq \varnothing \Rightarrow z_j^\varphi\right);
\label{eq:always}
\end{equation}
\hrule
{\small
  \setlength{\abovedisplayskip}{6pt}
  \setlength{\belowdisplayskip}{\abovedisplayskip}
  \setlength{\abovedisplayshortskip}{0pt}
  \setlength{\belowdisplayshortskip}{3pt}
\begin{multline}
z_i^{\eventually_{[a,b]}\varphi} = (t_{i+1} - t_i \leq b-a) \\ \wedge \bigvee_{j=0}^{K-1} \left([t_{j}, t_{j+1}] \cap [t_{i+1}+a, t_i+b] \neq \varnothing \wedge z_j^\varphi\right);
\label{eq:eventually}
\end{multline}
}%

\hrule
{\small
  \setlength{\abovedisplayskip}{6pt}
  \setlength{\belowdisplayskip}{\abovedisplayskip}
  \setlength{\abovedisplayshortskip}{0pt}
  \setlength{\belowdisplayshortskip}{3pt}
\begin{multline}
z_i^{\varphi_1 \until_{[a,b]} \varphi_2} = (t_{i+1} - t_i \leq b-a) \wedge \\ \bigvee_{j=0}^{K-1} \Big([t_{j}, t_{j+1}] \cap [t_{i+1}+a, t_i+b] \neq \varnothing \wedge z_j^{\varphi_2} \\ \wedge \bigwedge_{l=0}^{j} \left([t_{l}, t_{l+1}] \cap [t_{i}, t_{i+1}+b] \neq \varnothing \implies z_l^{\varphi_1}\right)\Big);
\label{eq:until}
\end{multline}
}%

\hrule
{\small
  \setlength{\abovedisplayskip}{6pt}
  \setlength{\belowdisplayskip}{\abovedisplayskip}
  \setlength{\abovedisplayshortskip}{0pt}
  \setlength{\belowdisplayshortskip}{3pt}
\begin{multline}
z_i^{\varphi_1 \release_{[a,b]} \varphi_2} = \bigwedge_{j=0}^{K-1} \Big(\big([t_{j}, t_{j+1}] \cap [t_{i}+a, t_{i+1}+b] \neq \varnothing \\ \Rightarrow z_j^{\varphi_2}\big) \vee \bigvee_{l=0}^{j-1} \left([t_{l}, t_{l+1}] \cap [t_{i+1}, t_{i+1}+b] \neq \varnothing \wedge z_l^{\varphi_1}\right)\Big).
\label{eq:release}
\end{multline}
}%

With the above rules of encoding, we can encode any STL formula as an LCF as follows. As mentioned earlier, $z_0^\varphi$ is what we ultimately want. In order to obtain $z_0^\varphi$, all of its dependencies on other $z$ formulas have to be resolved. Therefore, the algorithm runs recursively. The recursion stops at atomic predicates since they do not depend on any other $z$ formulas as shown in Eq.~\eqref{eq:atomic} and Eq.~\eqref{eq:neg-atomic}.

\revise{In order to prove the aforementioned soundness property, we proceed by induction. The base cases are the atomic predicates (Eq.~\eqref{eq:atomic}~and~\eqref{eq:neg-atomic}), for which we have provided some intuitions earlier. Then, the induction step has to be verified for each non-atomic predicate. The verification is straightforward but tedious. Here, we only verify the one for the ``$\always$" operation in Eq.~\eqref{eq:always}. The induction hypothesis is that the soundness property holds for all $z$ formulas on the RHS of Eq.~\eqref{eq:always}. Considering any trajectory $p$ deviating from $S$ up to $\epsilon$, by induction hypothesis, if $z_j^\varphi$ is true, then $\forall t \in [t_j, t_{j+1}]$, $(p,t) \sat \varphi$. For any $t \in [t_i, t_{i+1}]$ and any $t^\prime \in [t+a, t+b]$, we must have that $t^\prime \in [t_i+a, t_{i+1}+b]$. Now, assume that $z_i^{\always_{[a,b]}\varphi}$ is true. Let $j$ be such that $t^\prime \in [t_j, t_{j+1}]$. Then, $[t_i+a, t_{i+1}+b] \cap [t_j, t_{j+1}] \neq \varnothing$. According to the encoding, this implies that $z_j^\varphi$ is true. By induction hypothesis, we have that $(p, t^\prime) \sat \varphi$. To summarize, if $z_i^{\always_{[a,b]}\varphi}$ is true, then $\forall t \in [t_i,t_{i+1}]$, $\forall t^{\prime} \in \left[t+a, t+b\right]$, $\left(p, t^{\prime}\right) \vDash \varphi$, which is equivalent to say that $\forall t \in [t_i,t_{i+1}]$, $\left(p, t\right) \vDash \always_{[a,b]}\varphi$. Thus, we have proved the soundness property for $z_i^{\always_{[a,b]}\varphi}$. A complete proof can be found in \ifx\extended\undefined Appendix of~\cite{extended}\else Appendix\fi.}

\begin{remark}
\revise{With the above proof, it should be clear that although the encoding rules are stronger than we would need, i.e., $z^\phi_i$ encodes satisfaction over the entire segment, and thus make the problem harder to solve, it is indeed necessary. Otherwise, the induction does not hold.}
\end{remark}

\subsection{Encoding inter-agent collision avoidance with LCFs}
\label{sec:mult}
\newcommand{\safe}[1]{\mathtt{safe}\left(#1\right)}

In this section, we consider the problem of encoding the inter-agent collision avoidance with LCFs. Specifically, we aim at obtaining an LCF such that if this LCF is true, then at any time, the distance between any two agents is safe.
First, let us consider how to encode the specification that two time-stamped line segments are at least $\epsilon$ away from each other, which will be the building block for encoding the inter-agent collision avoidance specification. Consider two time-stamped line segments, $\mathtt{SEG}_1$ and $\mathtt{SEG}_2$.
Let the endpoints of $\mathtt{SEG}_1$ be $(t_{11}, p_{11})$ and $(t_{12}, p_{12})$. Similarly, $(t_{21}, p_{21})$ and $(t_{22}, p_{22})$ are the endpoints of $\mathtt{SEG}_2$. Define a function $\safe{}$ mapping them to an LCF as follows.
\begin{multline*}
\safe{\mathtt{SEG}_1, \mathtt{SEG}_2, \epsilon} := \left([t_{11}, t_{12}] \cap [t_{21}, t_{22}] = \varnothing\right) \\ \vee \Bigg(\left\|\frac{p_{11}+p_{12}}{2} - \frac{p_{21}+p_{22}}{2}\right\|_1 \geq \quad \\ \left\|\frac{p_{11}-p_{12}}{2}\right\|_1 + \left\|\frac{p_{21}-p_{22}}{2}\right\|_1 + \epsilon \sqrt{d}\Bigg),
\end{multline*}
where $d$ is the dimensionality of the workspace. \revise{Intuitively, if the above LCF is true, either of the following two conditions is true. \begin{inparaenum}\item the two segments are disjoint in the time dimension; or \item in the spatial dimension, the distance between the two centers is greater than the summation of the half-lengths of the two segments with a margin $\epsilon$, and thus they are disjoint.\end{inparaenum}}~Then, the specification that all the agents will not collide with each other is encoded as follows.
\begin{equation*}
z_{\texttt{inter}} = \bigwedge_{\substack{i, j = 1\\ i \neq j}}^{N} ~ \bigwedge_{\substack{k=1,\dots,K_i\\ l=1,\dots,K_j}} \safe{S_i^{(k)}, S_j^{(l)}, 2\epsilon + s_i + s_j},
\end{equation*}
which is, again, an LCF of the decision variables $\bigcup_{i=1}^{N}\bigcup_{k=0}^{K_i}\{t_{i,k}, p_{i,k}\}$. Recall that $s_i$ is the size of agent $i$. Please also note that we use $1$-norm instead of $2$-norm in the encoding to make the resulting expression linear (or at least piece-wise linear). Also, a formal proof of soundness can be found in \ifx\extended\undefined Appendix of~\cite{extended}\else Appendix\fi.

\subsection{Overall algorithm}
\label{sec:overall}
In this section, we show the overall algorithm. Each step of the algorithm is explained in the following.

\noindent{\bf Construct an AND-OR tree.} In Section~\ref{sec:STL} and Section~\ref{sec:mult}, we have shown how to transform the STL satisfaction and inter-agent collision avoidance to LCFs. These LCF formulas can be further merged with conjunctions and disjunctions into a single LCF. Such an LCF can be represented as an AND-OR tree. There are three types of nodes in the tree, AND nodes (i.e., conjunctions), OR nodes (i.e., disjunctions), and leaf nodes. Each AND or OR node has a finite number of children. Each leaf node refers to a linear expression \texttt{LE}.

\noindent{\bf Additional constraints.} First, we need additional constraints for the time instants. For each PWL path $S_i$, we need $0 = t_{i,0} \leq t_{i,1} \leq \cdots \leq t_{i,K_i} \leq T$, where $T$ is a constant specified by the user. Secondly, the maximum velocity of the PWL paths should also be constrained. For each PWL path $S_i$,
\[
\|p_{i,k+1} - p_{i,k}\|_1 \leq \texttt{vmax} * (t_{i,k+1} - t_{i,k}),\,k = 0,1,\cdots,K_i-1,
\]
where $\texttt{vmax}$ is a constant specified by the user. Finally, the PWL path must start from the initial position of the agent, i.e., $p_{i,0} = p^{\texttt{init}}_i$.
Please note that all these constraints are linear and can be easily merged into the AND-OR tree.

\noindent{\bf Create MILP constraints from the AND-OR tree.} In order to create MILP constraints, we have to eliminate all the disjunctions in the tree so that the whole tree is converted into a conjunction of linear constraints, i.e., $\bigwedge_i \texttt{LE}_i \geq 0$. Then, we can add each $\texttt{LE}_i \geq 0$ as a linear constraint to the MILP optimizer. To eliminate the disjunctions, we use the big-M method. For example, given an OR node, $\bigvee_{i=1}^{n} \texttt{LE}_i \geq 0$, we introduce $n$ binary variables $z_i,~i=1,\cdots,n$. Then, it can be shown that the conjunctive form $\left(\bigwedge_{i=1}^{n} \texttt{LE}_i + (1 - z_i) \cdot M \geq 0\right) \wedge \left(\sum_{i=1}^{n}z_i \geq 1\right)$ is equivalent to the original disjunctive form, where $M$ is a large enough positive constant. Intuitively, if $z_i = 1$, then $\texttt{LE}_i + (1 - z_i) \cdot M \geq 0$ becomes the original constraint $\texttt{LE}_i \geq 0$. On the other hand, if $z_i = 0$, then $\texttt{LE}_i \geq 0$ is disabled since $\texttt{LE}_i + M \geq 0$ is trivially true regardless of the value of $\texttt{LE}_i$. Finally, $\sum_{i=1}^{n}z_i \geq 1$ enforces that at least one of the constraints is enabled.

\noindent{\bf Putting it all together.} The algorithm first creates continuous variables in the optimizer, which represents the waypoints. Then, it constructs the AND-OR tree of the linear constraints. Next, disjunctions in the tree are eliminated, and the tree is converted into a list of linear constraints, which are then added to the optimizer. After the optimizer finds a feasible solution, the values of the continuous variables are returned, which determine the PWL paths.

\noindent\revise{{\bf Complexity.} The computational cost of solving a MILP problem is mostly determined by the number of binary variables. Therefore, we analyze the number of binary variables introduced for encoding an STL formula $\varphi$ with respect to a PWL path of length $K$. Due to the use of the big-M method, each child of an OR node in the AND-OR tree introduces a binary variable. As in Eq.~(\ref{eq:eventually}-\ref{eq:release}), each $z$ formula consists of $\mathcal{O}(K)$ disjunctions.\footnote{Please note that in the encoding of $\release$, although there are $\mathcal{O}(K^2)$ disjunctions for a single segment, after merging the repeated ones, there are only $\mathcal{O}(K)$ necessary disjunctions.} Since each segment has a $z$ formula, we will have $\mathcal{O}(K^2)$ disjunctions in order to encode a single operation. Let $|\varphi|$ be the number of operators in $\varphi$. The complexity of the proposed approach is $\mathcal{O}(K^2 \cdot |\varphi|)$. For MPC-based methods (e.g.,~\cite{raman2015reactive}), the complexity of encoding is $\mathcal{O}(N \cdot |\varphi|)$, where $N$ is the number of time steps. Although the proposed approach has a quadratic complexity while the MPC-based approach has a linear complexity, in many practical cases, the time horizon is long (and hence $N$ is large) but the task can be completed with very few line segments. In these cases, $K^2 \ll N$ and the proposed method drastically outperforms MPC-based methods, which is empirically verified in the experiments in Sec.~\ref{sec:exp}. On the other hand, in cases where the time horizon is small but the required number of segments is somehow large, using a MPC-based approach could be a better choice.}

\section{Experimental evaluation}
\label{sec:exp}
We evaluate the proposed approach on several benchmark scenarios and compare it with several other methods. The algorithm is implemented in Python, and the Gurobi optimizer is used for solving the MILP problems. The implementation is available at \url{https://github.com/sundw2014/STLPlanning}.
As for the robot dynamics, we use the Dubins vehicle model.
All experiments were conducted on a Linux workstation with two Intel Xeon Silver 4110 CPUs and 32 GB RAM.

\begin{figure*}[tbp]
    \centering
    \begin{subfigure}{.16\textwidth}
      \centering
      \includegraphics[width=\linewidth]{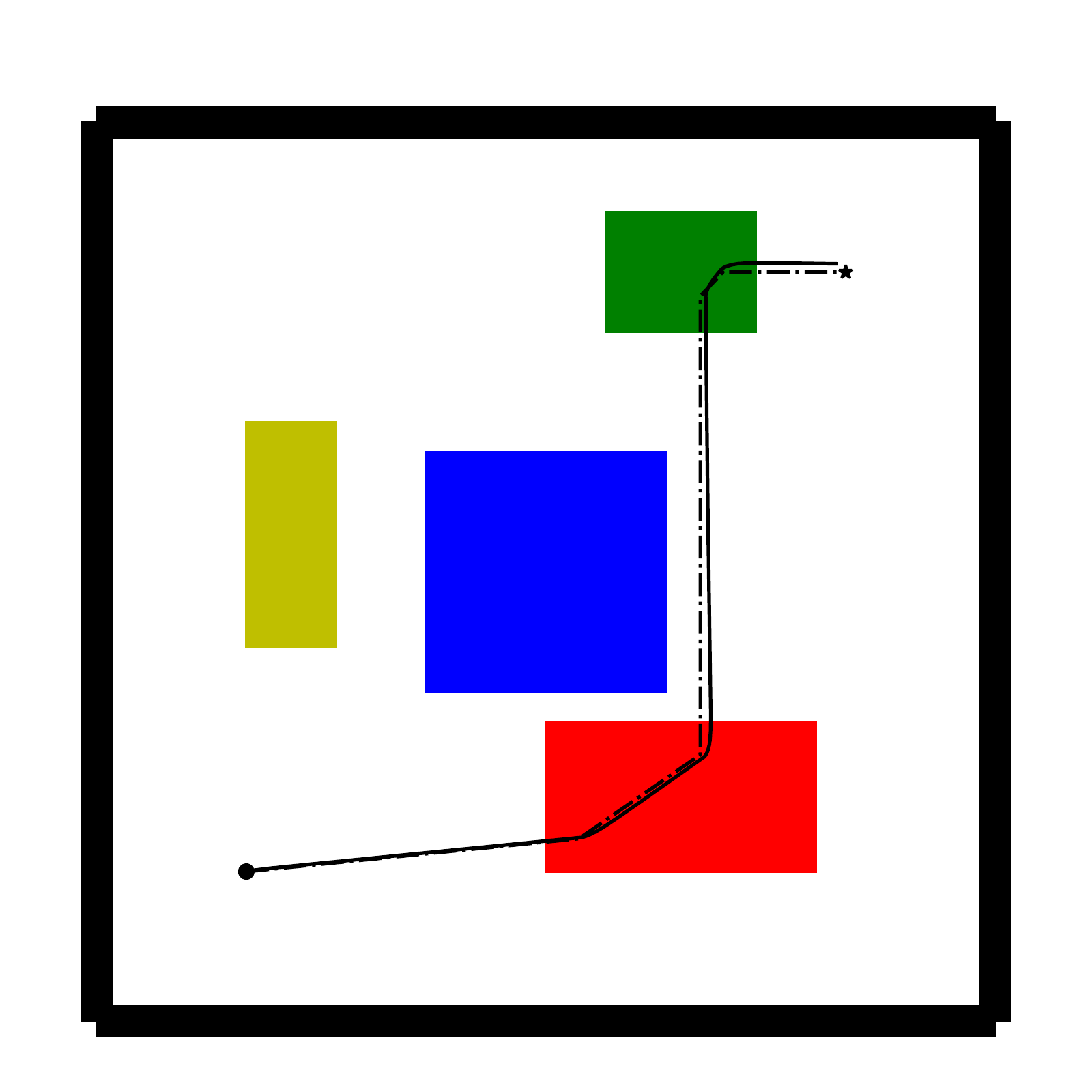}
      \caption{\texttt{stlcg-1}}
      \label{fig:stlcg-1}
    \end{subfigure}
    \begin{subfigure}{.16\textwidth}
      \centering
      \includegraphics[width=\linewidth]{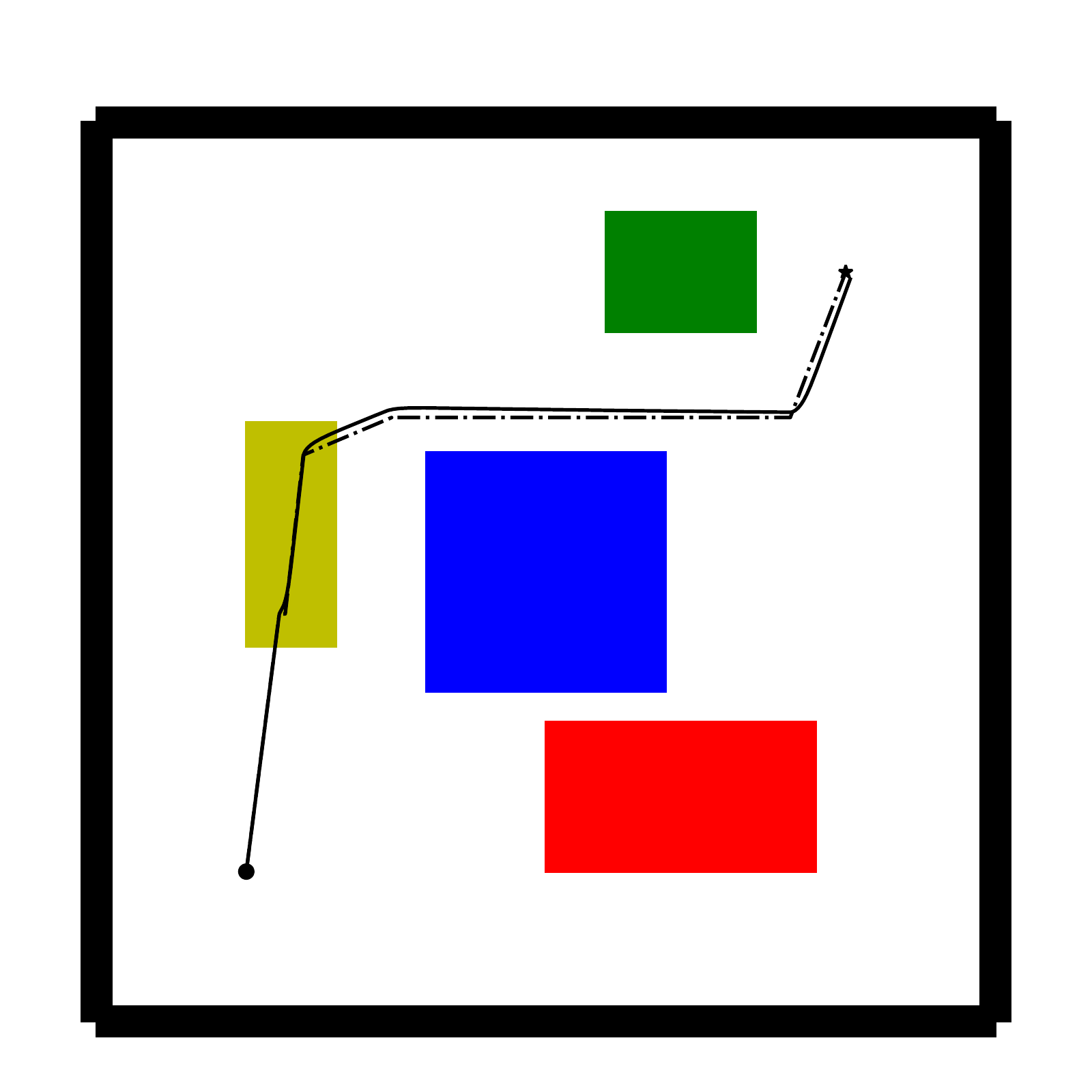}
      \caption{\texttt{stlcg-2}}
      \label{fig:stlcg-2}
    \end{subfigure}
    \begin{subfigure}{.16\textwidth}
      \centering
      \includegraphics[width=\linewidth]{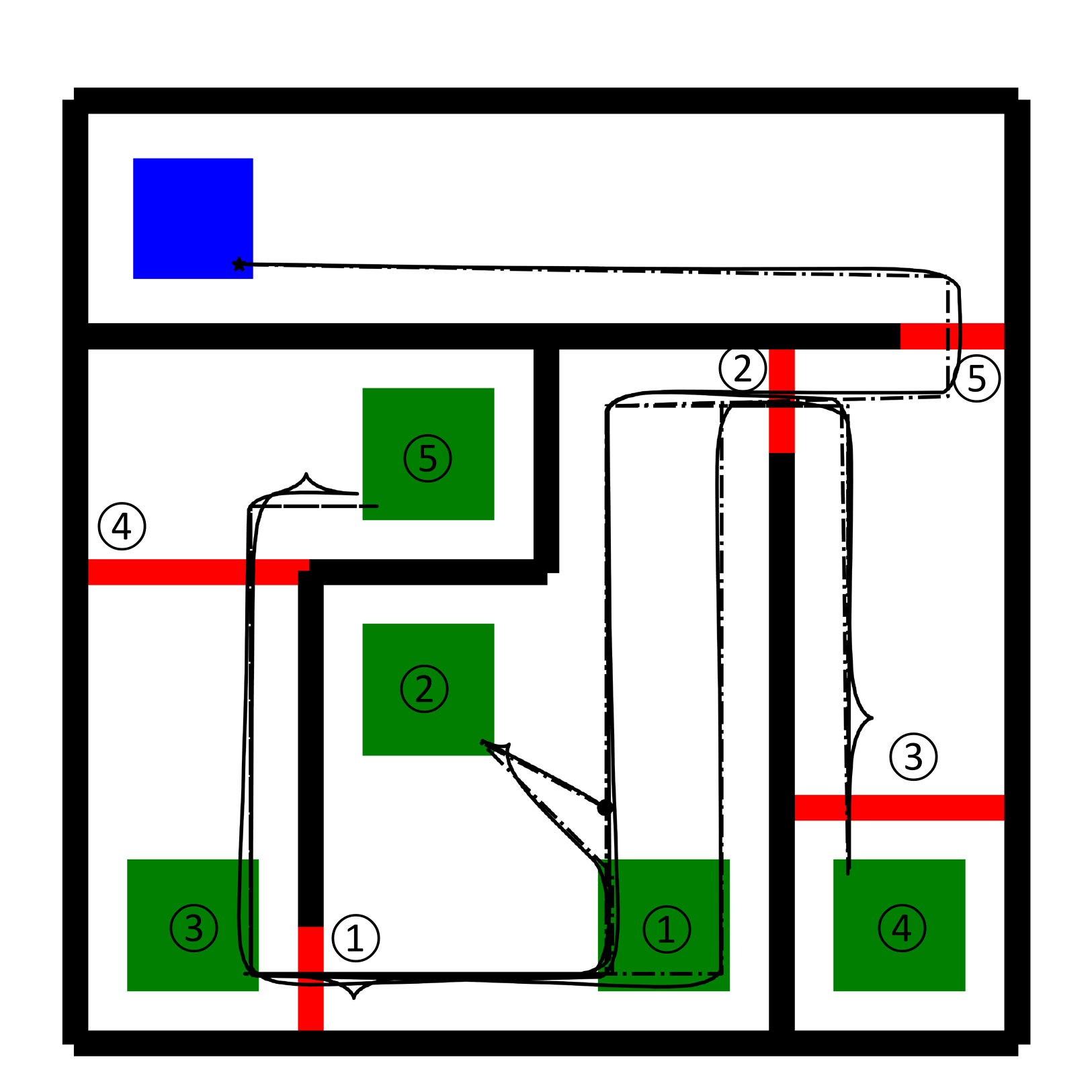}
      \caption{\texttt{doorpuzzle-1}}
      \label{fig:doorpuzzle-1}
    \end{subfigure}
    \begin{subfigure}{.16\textwidth}
      \centering
      \includegraphics[width=\linewidth]{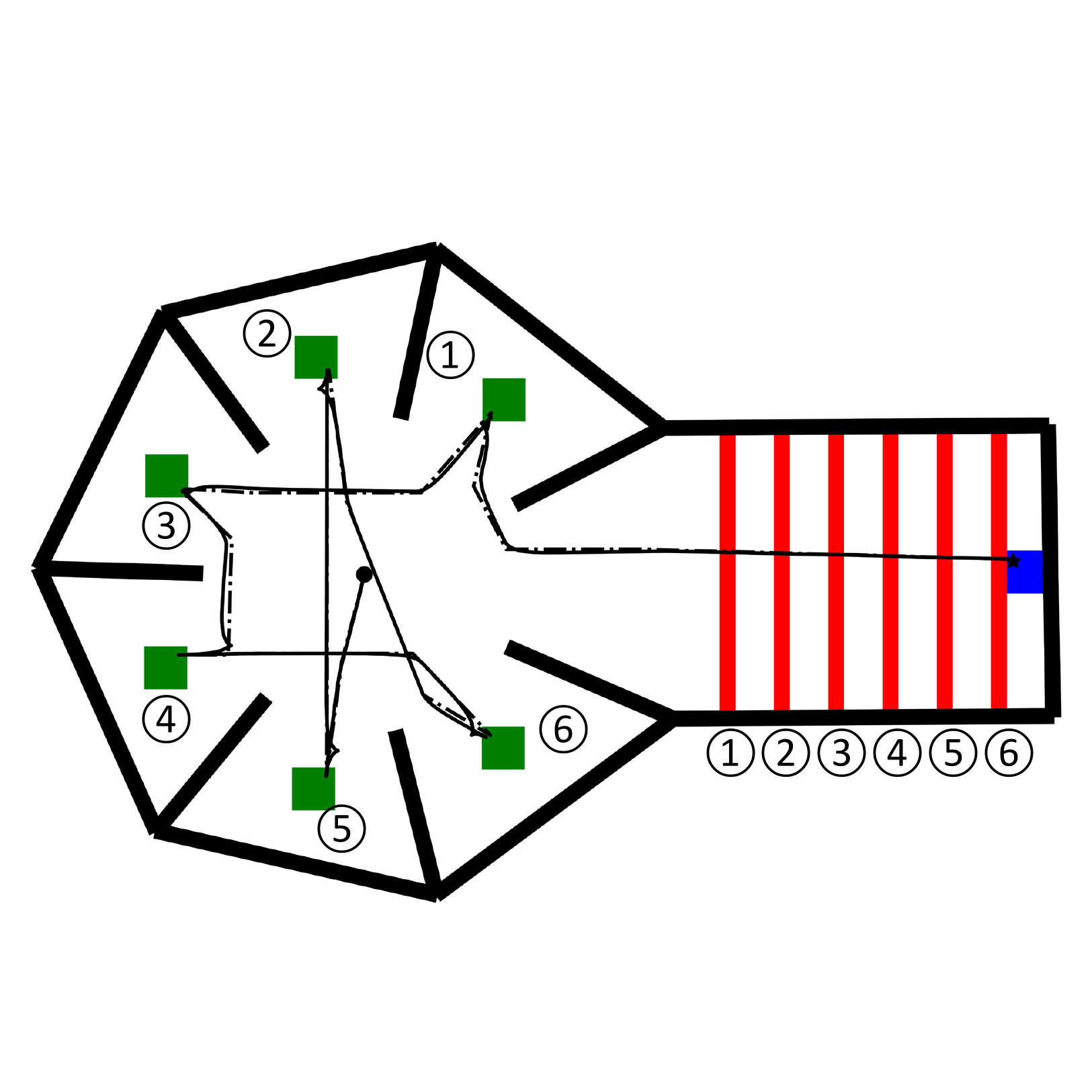}
      \caption{\texttt{doorpuzzle-2}}
      \label{fig:doorpuzzle-2}
    \end{subfigure}
    \begin{subfigure}{.16\textwidth}
      \centering
      \includegraphics[width=\linewidth]{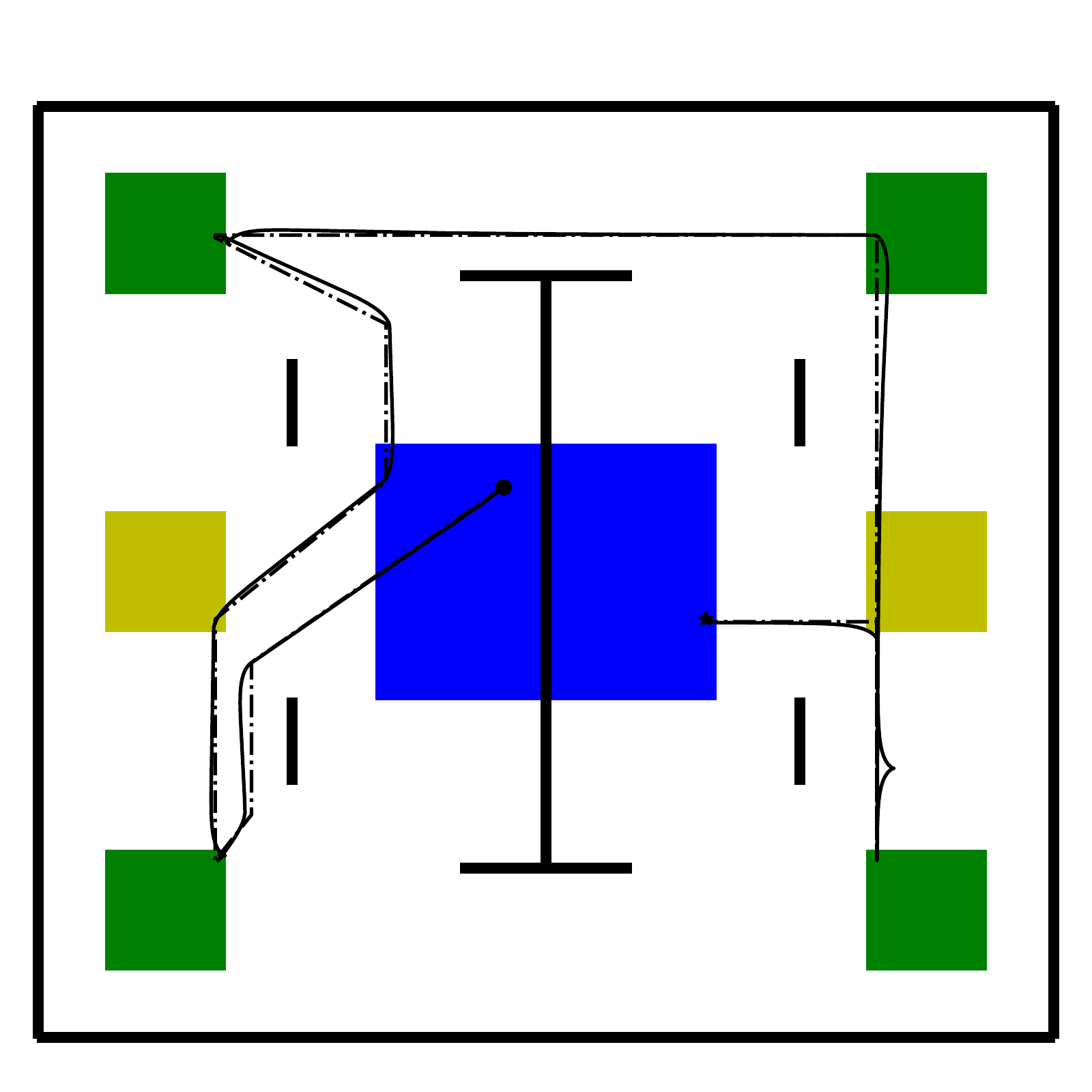}
      \caption{\texttt{rover-1}}
      \label{fig:rover-1}
    \end{subfigure}
    \begin{subfigure}{.16\textwidth}
      \centering
      \includegraphics[width=\linewidth]{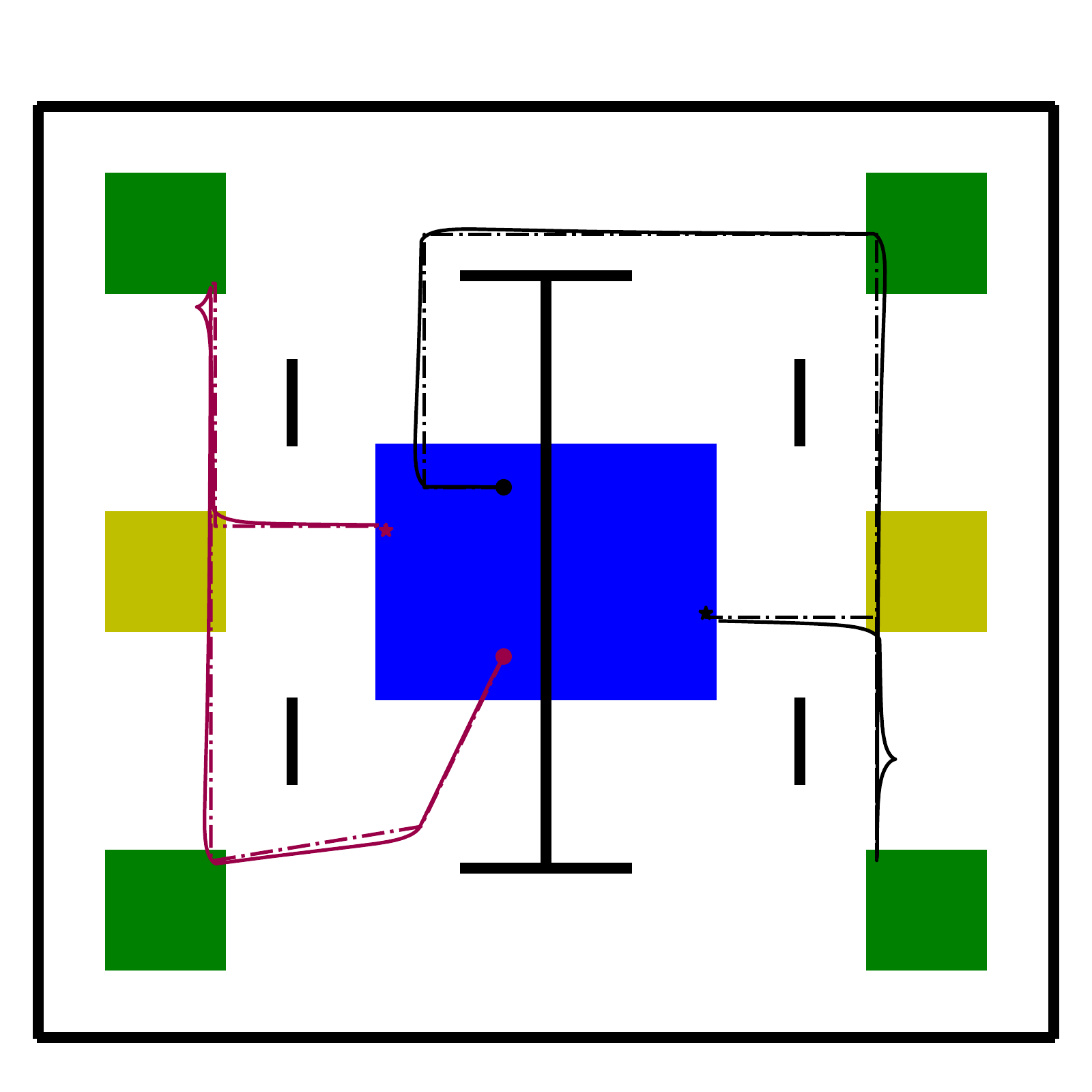}
      \caption{\texttt{rover-2}}
      \label{fig:rover-2}
    \end{subfigure}
    \caption{Benchmarks and results. Dashed-lines are the PWL paths found by the proposed method; Solid lines are the actual trajectories tracking the PWL paths; The circle on the trajectory is the starting point, and star is the end.}
    \label{fig:benchmarks-1}
\end{figure*}

\subsection{Benchmarks}
\label{sec:benchmarks}
Some of the benchmarks were borrowed from the motion planning literature~\cite{vega2018admissible, leung2020back}. We also designed several other benchmarks in order to show the ability of the proposed method to handle complicated MA-STL specifications. Specifically, the following benchmarks are used.

\texttt{stlcg-1} is from ~\cite{leung2020back}. As shown in Fig.~\ref{fig:stlcg-1}, a robot starting from the bottom-left corner is asked to visit the up-right corner. It is also asked to visit and avoid some regions in the middle. Denote the four regions by $Y$ (yellow), $B$ (blue), $G$ (green), and $R$ (red) respectively. The task is specified using an STL formula $\left(\eventually_{[0,T]}\always_{[0,5]}R\right) \wedge \left(\eventually_{[0,T]}\always_{[0,5]}G\right) \wedge \left(\always_{[0,T]} \neg B\right)$. Please note that in the original benchmark, region $B$ is a circle, and here we replace it with its circumscribed square. \texttt{stlcg-2} uses the same environment as in \texttt{stlcg-1} but with a different STL specification $\left(\eventually_{[0,T]}\always_{[0,5]}Y\right) \wedge \left(\always_{[0,T]} \neg G\right) \wedge \left(\always_{[0,T]} \neg B\right)$.

\texttt{doorpuzzle-1} is from ~\cite{vega2018admissible}. As shown in Fig.~\ref{fig:doorpuzzle-1}, a robot is asked to visit the goal region (blue). However, there are walls (black) and doors (red) in the environment. Before being able to open a door, the robot has to visit the correspondingly numbered green region to pick the key. Denote the goal by $G$, the wall by $W$, the doors by $D_1, \cdots, D_5$, and the keys by $K_1, \cdots, K_5$. Then, the task can be specified using an STL formula $\left(\eventually_{[0,T]} G\right) \wedge \left(\always_{[0, T]} \neg W\right) \wedge \left(\bigwedge_{i=1}^{5}\neg D_i \until_{[0,T]} K_i\right)$. \noindent\texttt{doorpuzzle-2} is a similar scenario from ~\cite{vega2018admissible} with 6 doors.

\texttt{rover-1} and \texttt{rover-2} are designed to evaluate the ability to handle complicated multi-agent STL specifications. As shown in Fig.~\ref{fig:rover-1}, rovers are asked to visit the goal regions (green) to make scientific observations while conforming to the following rules: \begin{inparaenum} \item Every rover should visit the charging station (blue) within $t_c$ time units every time they leave the charging station; \item After visiting a goal region, the rover should visit a transmitter (yellow) within $t_d$ time units, to transmit the collected data to the remote control; \item The rovers should avoid the walls (black) and each other. \end{inparaenum} Denote the charging station by $C$, the walls by $W$, the transmitters by $S_1$ and $S_2$, and the goals by $G_1, \cdots, G_4$. Then, the rule of charging can be encoded as $\varphi_1 := \always_{[0,T]}(\neg C \implies \eventually_{[0,t_{c}]} C)$. The rule of transmitting can be encoded as $\varphi_2 := \always_{[0,T]}(\bigvee_{i=1}^{4} G_i \implies \eventually_{[0,t_d]} \bigvee_{i=1}^{2} S_i)$. The rule of avoiding walls can be encoded as $\varphi_3 := \always_{[0,T]}\neg W$. Assuming that $N$ rovers are involved, the MA-STL specification encoding the task is $\Psi = \bigwedge_{i=1}^{N} \pi_i^{\left(\varphi_1 \wedge \varphi_2 \wedge \varphi_3\right)} \wedge \bigwedge_{j=1}^{4}\bigvee_{i=1}^{N} \pi_i^{\eventually_{[0,T]} G_j}$. We set $N = 1$ and $2$ for \texttt{rover-1} and \texttt{rover-2} respectively.

\texttt{wall-1} and \texttt{wall-2} are designed to evaluate the ability to arrange multiple agents to avoid collisions. As shown in Fig.~\ref{fig:wall-1} and Fig.~\ref{fig:wall-2}, a group of agents are asked to visit some goal regions, but there is a narrow door in the middle of the map. In order to avoid collisions, the agents have to figure out an order for them to go through the door. Let the wall (black) be $W$, and the goals be $G_1, \cdots, G_4$. The task is specified as $\Psi = \bigwedge_{i=1}^{4} \pi_i^{\always_{[0,T]} \neg W \wedge \eventually_{[0,T]} G_i}$.

\begin{figure*}[tbp]
    \centering
    \begin{subfigure}{\textwidth}
        \centering
        \includegraphics[width=0.23\textwidth, trim={0 120 0 120}, clip]{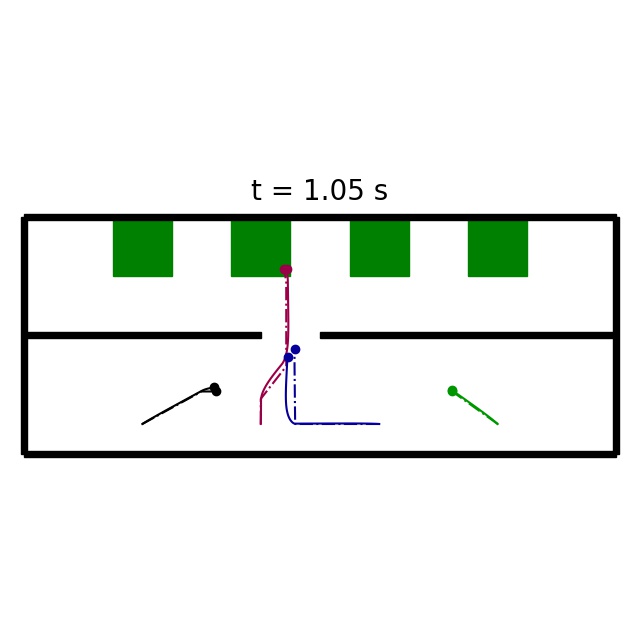}
        \includegraphics[width=0.23\textwidth, trim={0 120 0 120}, clip]{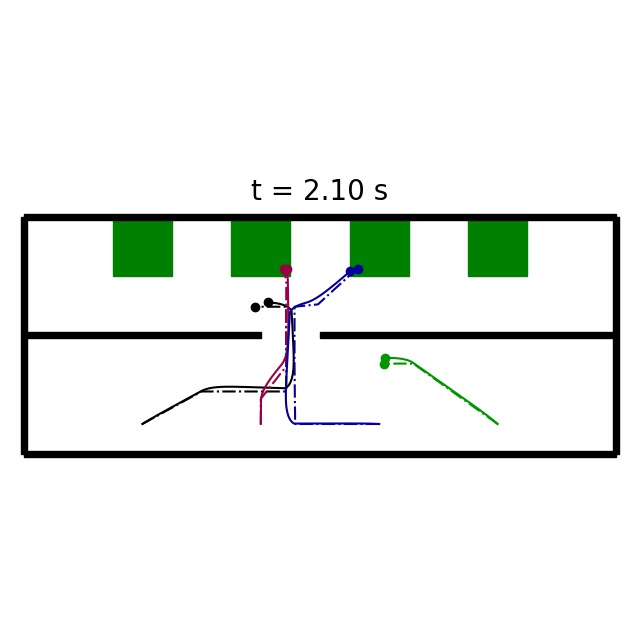}
        \includegraphics[width=0.23\textwidth, trim={0 120 0 120}, clip]{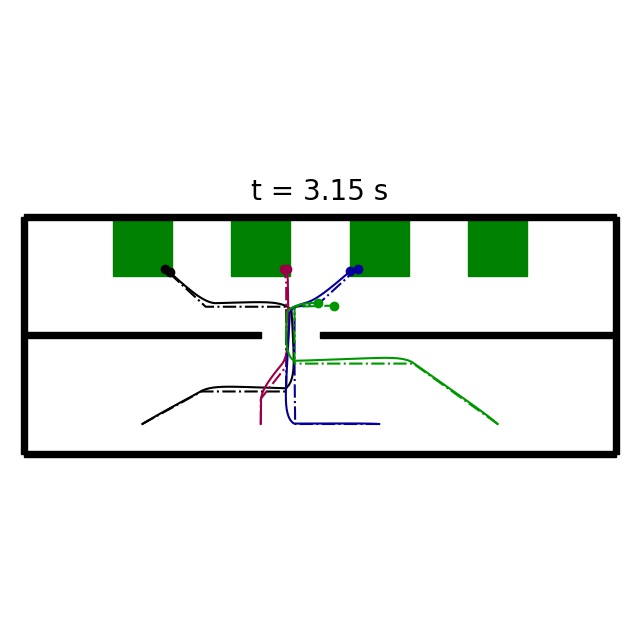}
        \includegraphics[width=0.23\textwidth, trim={0 120 0 120}, clip]{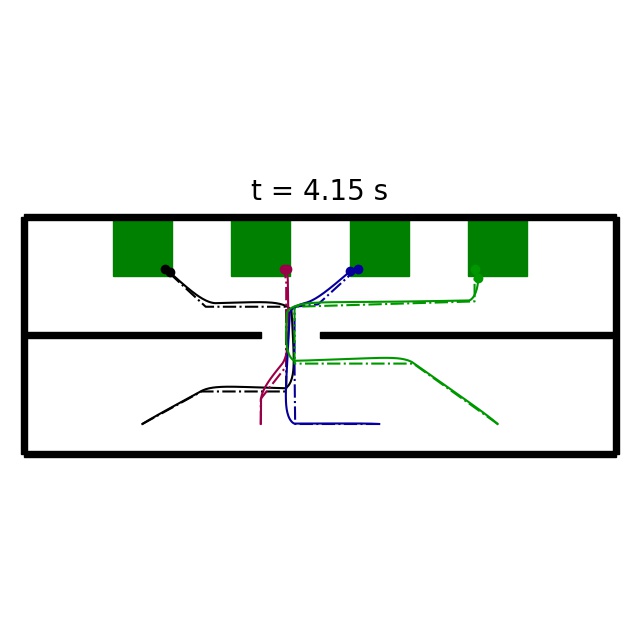}
        \caption{\texttt{wall-1}}
        \label{fig:wall-1}
    \end{subfigure}
    \begin{subfigure}{\textwidth}
        \centering
        \includegraphics[width=0.23\textwidth, trim={0 120 0 120}, clip]{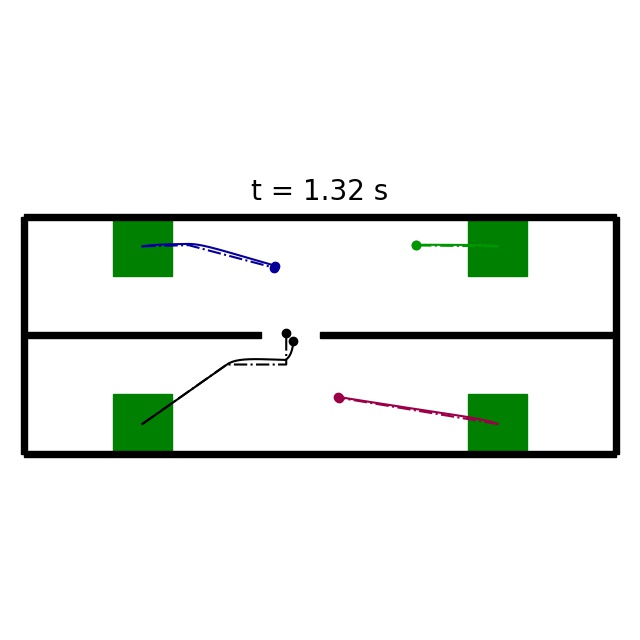}
        \includegraphics[width=0.23\textwidth, trim={0 120 0 120}, clip]{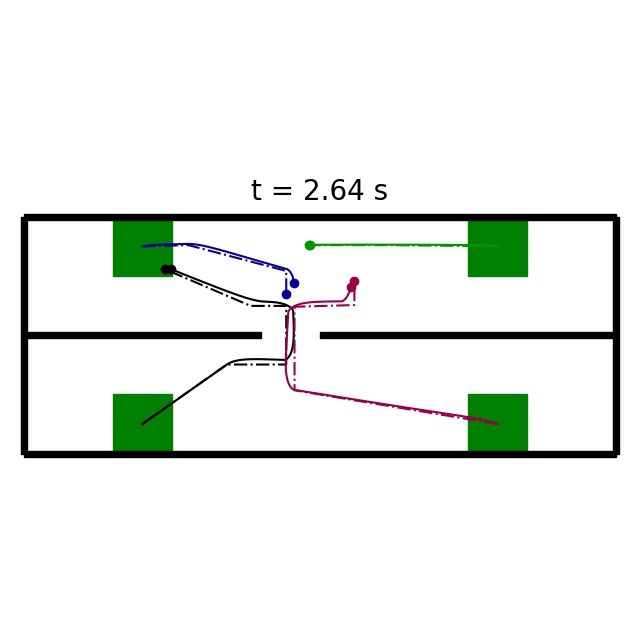}
        \includegraphics[width=0.23\textwidth, trim={0 120 0 120}, clip]{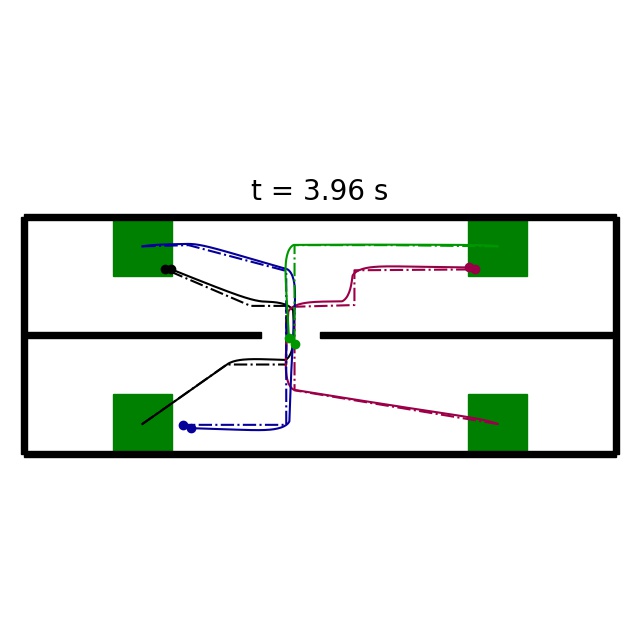}
        \includegraphics[width=0.23\textwidth, trim={0 120 0 120}, clip]{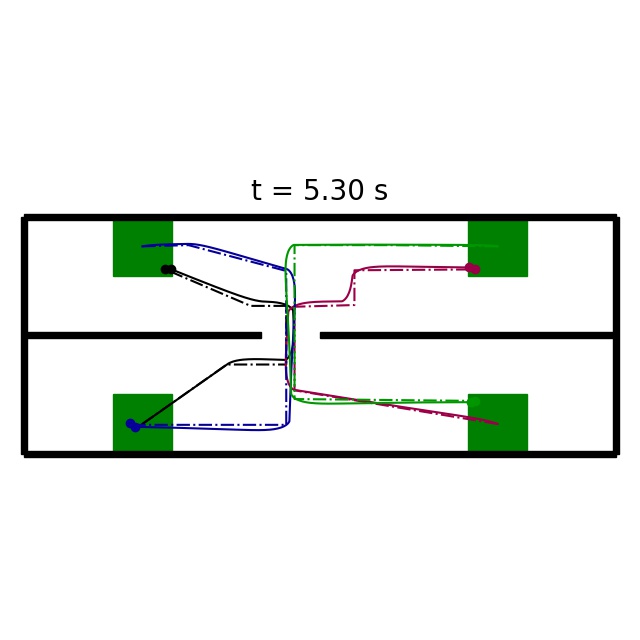}
        \caption{\texttt{wall-2}}
        \label{fig:wall-2}
    \end{subfigure}
    \caption{Benchmarks and results. Dashed-lines are the PWL paths found by the proposed method; Solid lines are the actual trajectories tracking the PWL paths. For each benchmark, we show four snapshots of the simulation with a clock in the title. The dots indicate the current locations of the agents, and agents are marked in different colors.}
    \label{fig:benchmarks-2}
\end{figure*}

\subsection{Comparison with other methods}
We compared our method with others, including an MPC-based method~\cite{raman2014model} and an abstraction-based method. Please note that although~\cite{buyukkocak2021planning} is the closest work to ours, the authors did not provide a publicly available implementation of their approach. Also, the benchmarks \texttt{doorpuzzle-1} and \texttt{doorpuzzle-2} are borrowed from~\cite{vega2018admissible}, but the authors did not either provide an implementation of their algorithm or report the run time of their algorithm on these two benchmarks. Therefore, we were not able to compare with these approaches. The details of the setup are as follows.

\revise{As stated earlier, the lengths of the PWL paths, i.e., $K_1, K_2, \cdots, K_N$, are constants. In the experiments, we set $K = K_1 = K_2 = \cdots = K_N$. Obviously, $K$ should be large enough, otherwise, the problem is not feasible. Thus, we start from $K = \underline{K}$, where $\underline{K}$ is an initial guess by the user according to the task. If the problem is infeasible, we increment $K$ by $1$ until the problem becomes feasible.}

In~\cite{raman2014model}, the authors proposed an MPC-based method of planning paths from STL specifications. It also models the planning as an MILP problem. The decision variables are just the states of the system at $\Delta t, 2 \Delta t, \cdots, \lceil \frac{T}{\Delta t} \rceil \Delta t$, where the time step $\Delta t > 0$ is a small constant specified by the user. The dynamics of the robots are encoded as constraint of the MILP problem. Furthermore, the authors of~\cite{raman2014model} proposed a group of rules with which an STL formula can be converted into linear constraints. In the experiments, we set $\Delta t = 0.1$. To make the comparison fair, we use a very simple dynamics $\dot{x} = u$ for MPC, i.e., an integrator. As for the inter-agent collision avoidance requirement, we represent it as constraints that at each time step, the distance between any pair of agents must be greater than a threshold. Obviously, the performance of MPC highly relies on the time horizon $T$. However, we do not have an idea of how large $T$ should be for completing each benchmark. In order to determine a good time bound that is not too large but large enough for completing the task, we first run our algorithm with $T = 1000$ which is large enough for all the benchmarks in this section. Our algorithm returns a PWL path with the optimized travel time\footnote{The solution is not exactly optimal. The precision of the solution depends on one of Gurobi's arguments, ``\texttt{MIPGap}". We always uses the same ``\texttt{MIPGap}" for MPC and our method.}. Then the makespan of the planned PWL paths is used as the $T$ when running the MPC-based algorithm. \revise{Therefore, both the proposed method and the MPC-based method need a pre-process to determine $K$ or $N$. To make the comparison clear and fair, we did not include the time spent for this pre-process in TABLE~\ref{tab:results}.}

We also implement an abstraction-based method based on~\cite{brown2020optimal, ma2019searching}, which uses a MILP-based approach for optimal task assignment and ordering, and leverages the priority-based search to plan collision-free trajectories to achieve all the assigned tasks. It does not support general STL specifications but supports the tasks in \texttt{wall-1} and \texttt{wall-2}.

\subsection{Observations}
\begin{table}[tbp]
    \centering
\begin{tabular}{|c|c|c|c|}
    \hline
    Benchmark & Ours (s) & MPC (s)& ABS (s)\\
    \hline
    \texttt{stlcg-1} & \textbf{0.855} & 6.5 & N/A \\ \hline
    \texttt{stlcg-2} & \textbf{0.175} & 4.0 & N/A \\ \hline
    \texttt{doorpuzzle-1} & \textbf{49.5} & TO & N/A \\ \hline
    \texttt{doorpuzzle-2} & \textbf{175.6} & 2102.5 & N/A \\ \hline
    \texttt{rover-1} & \textbf{180.5} & TO & N/A \\ \hline
    \texttt{rover-2} & \textbf{101.6} & 2733.7 & N/A \\ \hline
    \texttt{wall-1} & \textbf{20.8} & 113.7 & 50.7 \\ \hline
    \texttt{wall-2} & 172.9 & 163.7 & \textbf{79.1} \\ \hline
\end{tabular}
\caption{Run time on benchmarks. MPC failed in some cases due to time out (TO). ABS does not support general STL scenarios and can only handle the last two benchmarks.}
\label{tab:results}
\end{table}

The results are summarized in TABLE~\ref{tab:results}. Planned paths can be found in Fig.~\ref{fig:benchmarks-1} and Fig.~\ref{fig:benchmarks-2}. Some observations are in order. Firstly, the proposed method can correctly solve planning problems with complex STL specifications for multiple agents (up to $4$) while other methods in comparison failed in some cases. Secondly, the proposed method outperforms other methods in almost all cases in terms of run time. Thirdly, as shown in Fig.~\ref{fig:benchmarks-1} and Fig.~\ref{fig:benchmarks-2}, because the tracking error is taken into account when planning the PWL paths, the actual trajectories of the robots satisfy the STL specification although they deviate from the reference PWL paths. Also, results show that our algorithm is able to correctly figure out the logical ordering of events with temporal constraints, then automatically assign tasks to each agent and do essential arrangement to avoid inter-agent collision.
It is also worth mentioning that the tool (\texttt{stlcg}) proposed in~\cite{leung2020back} also uses a fixed time step and uses gradient decent to minimize the violation of the STL specification. It takes minutes to find paths for its two benchmark scenarios, \texttt{stlcg-1} and \texttt{stlcg-2}, while our method takes less than one second.

Furthermore, we evaluated the proposed approach on selected benchmarks, including \texttt{doorpuzzle-1}, \texttt{doorpuzzle-2}, \texttt{wall-1}, and \texttt{wall-2}, with real-world robots on the Robotarium~\cite{pickem2017robotarium} platform. \revise{For the real robots, we use the official tracking controller provided by the Robotarium team, and the tracking error is estimated from simulations using the official simulator. Experiments show that with the proposed approach and the tracking controller, the robots can safely complete the tasks.} Videos can be found in the supplementary material.

\section{Conclusion}
We introduced a novel method to synthesize long-horizon motions of multi-agent robotic systems for STL specifications.
Our method can effectively encode complex specifications and support long-time horizon synthesis due to the combinatorial use of PWL reference paths and guaranteed tracking controller.
We plan to further reduce the complexity of the encoding rules and support planning for larger-scale problems.

\bibliographystyle{IEEEtran}
\bibliography{IEEEabrv,references}

\ifx\extended\undefined
\else
\newpage 
\onecolumn
\appendix
\subsection{Soundness of the STL encoding}
The following lemmas will be used.
\begin{lemma}
\label{lemma:outside}
Given a time-stamped line segment $S$ whose endpoints are $(t_1, p_1)$ and $(t_2, p_2)$, a polytope $\poly{(H,b)}$, and $\epsilon > 0$, if
\[
\bigvee_{j=1}^{\polydim{H}}(H^{(j)} \cdot p_1 - b^{(j)} - \epsilon \|H^{(j)}\|_2 \geq 0 \wedge H^{(j)} \cdot p_{2} - b^{(j)} - \epsilon \|H^{(j)}\|_2 \geq 0)
\]
is true, then for all $t \in [t_1, t_2]$, we have that $B_{\epsilon}(S(t)) \cap \poly{(H,b)} = \varnothing$.
\end{lemma}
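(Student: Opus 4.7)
The plan is to prove the lemma by a two-step argument: extract a single ``witness face'' of the polytope from the disjunction, show the line segment lies strictly inside the corresponding half-space (pushed in by $\epsilon\|H^{(j)}\|_2$), and then use Cauchy--Schwarz to extend from the segment to its $\epsilon$-tube.

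First I would extract the witness. Since the hypothesis disjunction is true, there exists an index $j \in \{1, \dots, \polydim{H}\}$ such that simultaneously $H^{(j)} \cdot p_1 \geq b^{(j)} + \epsilon \|H^{(j)}\|_2$ and $H^{(j)} \cdot p_2 \geq b^{(j)} + \epsilon \|H^{(j)}\|_2$. Fix this $j$. Because every $S(t)$ with $t \in [t_1,t_2]$ is a convex combination of $p_1$ and $p_2$, linearity of the map $x \mapsto H^{(j)} \cdot x$ immediately gives $H^{(j)} \cdot S(t) \geq b^{(j)} + \epsilon \|H^{(j)}\|_2$ for all such $t$.

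Next I would extend from the segment to the $\epsilon$-ball around each $S(t)$. For any $y \in B_\epsilon(S(t))$, Cauchy--Schwarz yields
\[
H^{(j)} \cdot (y - S(t)) \;\geq\; -\|H^{(j)}\|_2 \cdot \|y - S(t)\|_2 \;\geq\; -\epsilon\|H^{(j)}\|_2.
\]
Adding this to the previous bound gives $H^{(j)} \cdot y \geq b^{(j)}$, which means $y$ fails the $j$-th defining constraint of $\poly(H,b)$, and thus $y \notin \poly(H,b)$. Since $y$ was arbitrary in $B_\epsilon(S(t))$, we conclude $B_\epsilon(S(t)) \cap \poly(H,b) = \varnothing$, as desired.

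I do not expect a serious obstacle here: the argument is essentially a half-space reformulation plus Cauchy--Schwarz. The only delicate point is a measure-zero boundary artifact, namely that the bound $H^{(j)} \cdot y \geq b^{(j)}$ is non-strict, so a point $y$ lying exactly on the face $H^{(j)} \cdot x = b^{(j)}$ with $\|y - S(t)\|_2 = \epsilon$ is a borderline case; this is harmless under the paper's convention of treating polytopes as closed and interpreting ``disjoint'' up to their boundary, and would be eliminated entirely by replacing the large inequality in the hypothesis by a strict one.
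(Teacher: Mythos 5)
Your proof is correct and takes essentially the same route as the paper's: extract the witness face $j$, observe by convexity that the whole segment lies in the shifted half-space $H^{(j)} \cdot q \geq b^{(j)} + \epsilon\|H^{(j)}\|_2$, and conclude that the $\epsilon$-ball around each $S(t)$ misses the polytope (the paper phrases your Cauchy--Schwarz step as every point of that half-space being outside $\poly(H,b)$ ``with a margin greater than $\epsilon$''). The non-strict boundary artifact you flag is present, and likewise glossed over, in the paper's own argument, so your version is if anything slightly more explicit.
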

\begin{proof}
There exists $j \in \{1,\cdots,\polydim{H}\}$ such that $H^{(j)} \cdot p_1 - b^{(j)} - \epsilon \|H^{(j)}\|_2 \geq 0$ and $H^{(j)} \cdot p_{2} - b^{(j)} - \epsilon \|H^{(j)}\|_2 \geq 0$, which means both $p_1$ and $p_2$ are in the half-space defined by $\mathtt{HS} := \{q \in \mathcal{W}\ | \ H^{(j)} \cdot q - b^{(j)} - \epsilon \|H^{(j)}\|_2 \geq 0\}$. Thus, for all $t \in [t_1, t_2]$, $S(t) \in \mathtt{HS}$. Also, every point in $\mathtt{HS}$ is outside $\poly{(H,b)}$ with a margin greater than $\epsilon$. Therefore, for all $t \in [t_1, t_2]$, we have that $B_{\epsilon}(S(t)) \cap \poly{(H,b)} = \varnothing$.
\end{proof}

\begin{lemma}
\label{lemma:inside}
Given a time-stamped line segment $S$ whose endpoints are $(t_1, p_1)$ and $(t_2, p_2)$, a polytope $\poly{(H,b)}$, and $\epsilon > 0$, if
\[
\bigwedge_{j=1}^{\polydim{H}}(b^{(j)} - H^{(j)} \cdot p_1 - \epsilon \|H^{(j)}\|_2 \geq 0 \wedge b^{(j)} - H^{(j)} \cdot p_2 - \epsilon \|H^{(j)}\|_2 \geq 0)
\]
is true, then for all $t \in [t_1, t_2]$, we have that $B_{\epsilon}(S(t)) \subseteq \poly{(H,b)}$.
\end{lemma}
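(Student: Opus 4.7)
The plan is to prove the lemma by a straightforward pointwise argument: fix an arbitrary $t \in [t_1, t_2]$ and an arbitrary $q \in B_\epsilon(S(t))$, and then verify that $q$ satisfies every face inequality $H^{(j)} \cdot q \leq b^{(j)}$ defining the polytope. Since the hypothesis is a conjunction indexed by $j \in \{1, \dots, \polydim{H}\}$, it suffices to treat a single face $j$ and then conclude by taking the conjunction.

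For the single-face argument, I would proceed in three steps. First, write $S(t) = \lambda p_1 + (1-\lambda) p_2$ for some $\lambda \in [0,1]$, which is immediate from the definition of a PWL segment. Second, apply linearity of $H^{(j)} \cdot (\cdot)$ to the two hypotheses $H^{(j)} \cdot p_1 \leq b^{(j)} - \epsilon \|H^{(j)}\|_2$ and $H^{(j)} \cdot p_2 \leq b^{(j)} - \epsilon \|H^{(j)}\|_2$ and take the convex combination to obtain $H^{(j)} \cdot S(t) \leq b^{(j)} - \epsilon \|H^{(j)}\|_2$. Third, use the Cauchy-Schwarz inequality to handle the deviation from $S(t)$ to $q$: writing $q = S(t) + (q - S(t))$ with $\|q - S(t)\|_2 \leq \epsilon$, we have
\[
H^{(j)} \cdot q = H^{(j)} \cdot S(t) + H^{(j)} \cdot (q - S(t)) \leq H^{(j)} \cdot S(t) + \|H^{(j)}\|_2 \cdot \|q - S(t)\|_2 \leq H^{(j)} \cdot S(t) + \epsilon \|H^{(j)}\|_2 \leq b^{(j)}.
\]
Since this holds for every face index $j$, we conclude $q \in \poly{(H,b)}$, and since $q \in B_\epsilon(S(t))$ was arbitrary, $B_\epsilon(S(t)) \subseteq \poly{(H,b)}$.

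There is no real obstacle here; the argument is essentially the dual of Lemma~\ref{lemma:outside}, replacing the disjunction over faces by a conjunction and reversing the inequality direction. The only subtlety worth spelling out is the justification of the $\epsilon \|H^{(j)}\|_2$ slack via Cauchy-Schwarz, which is precisely what motivates the $\epsilon \|H^{(j)}\|_2$ term appearing in the shrinking of each face in Eq.~\eqref{eq:atomic}. This lemma is what makes the encoding of positive atomic predicates $\pi^\mu$ sound: it guarantees that a segment whose endpoints lie in the $\epsilon$-shrunk polytope has its entire $\epsilon$-tube contained in the original polytope, which is exactly the robustness property required in Section~\ref{sec:STL}.
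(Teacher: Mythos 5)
Your proof is correct and follows essentially the same route as the paper's: the paper phrases it as ``shrink each face by $\epsilon\|H^{(j)}\|_2$, note the shrunk polytope is convex so it contains the whole segment, and conclude,'' which is exactly your convex-combination step plus your Cauchy--Schwarz step. If anything, you are more explicit than the paper, which leaves the Cauchy--Schwarz justification of the $\epsilon\|H^{(j)}\|_2$ slack implicit in the phrase ``moving each face along the normal vector by $\epsilon$.''
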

\begin{proof}
Instead of considering bloating the line segment, we consider shrinking the polytope. The new polytope $\mathtt{P} := \{q \in \mathcal{W}\ | \ \bigwedge_{j=1}^{\polydim{H}}b^{(j)} - H^{(j)} \cdot p - \epsilon \|H^{(j)}\|_2 \geq 0\}$ is obtained by shrinking the original one by $\epsilon$, i.e., moving each face of the polytope along the normal vector pointing inside by $\epsilon$. Since both $p_1$ and $p_2$ are inside $\mathtt{P}$, we have that the whole segment are contained in $\mathtt{P}$. Thus, for all $t \in [t_1, t_2]$, we have that $B_{\epsilon}(S(t)) \subseteq \poly{(H,b)}$.
\end{proof}

\begin{theorem}
Consider an STL formula $\varphi$, a PWL path $S$ whose timed waypoints are $\{t_i, p_i)\}_{i=0}^K$, and the LCFs $z_i^\varphi, i=0,\cdots,K-1$, constructed using the proposed encoding algorithm. For {\bf any} position trajectory $p$ satisfying $\forall t \geq 0$, $\|p(t) - S(t)\| \leq \epsilon$, if $z_i^\varphi$ is true, then $\forall t \in [t_i, t_{i+1}]$, $(p, t) \sat \varphi$.
\label{thm:STL-sound}
\end{theorem}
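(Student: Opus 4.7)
The plan is to prove the theorem by structural induction on the STL formula $\varphi$, mirroring the recursive definition of the $z$-encoding. The hypothesis to propagate is exactly the conclusion of the theorem: for every sub-formula $\psi$, every segment index $i$, and every trajectory $p$ with $\sup_t \|p(t)-S(t)\| \leq \epsilon$, if $z_i^\psi$ is true then $(p,t)\sat\psi$ for all $t\in[t_i,t_{i+1}]$.

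For the base cases $\varphi=\pi^\mu$ and $\varphi=\neg\pi^\mu$, I would invoke Lemma~\ref{lemma:inside} and Lemma~\ref{lemma:outside} respectively, applied to the $i$-th segment of $S$ with the polytope $\poly(H,b)$ from $\mu$. Since $\|p(t)-S(t)\|\leq\epsilon$ means $p(t)\in B_\epsilon(S(t))$, these lemmas directly yield $p(t)\in\poly(H,b)$ or $p(t)\notin\poly(H,b)$ for every $t\in[t_i,t_{i+1}]$, which is exactly $(p,t)\sat\pi^\mu$ or $(p,t)\sat\neg\pi^\mu$. The Boolean cases $\varphi_1\wedge\varphi_2$ and $\varphi_1\vee\varphi_2$ are immediate from the induction hypothesis and the encoding $z_i^{\varphi_1\wedge\varphi_2}=z_i^{\varphi_1}\wedge z_i^{\varphi_2}$, $z_i^{\varphi_1\vee\varphi_2}=z_i^{\varphi_1}\vee z_i^{\varphi_2}$.

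The temporal cases form the main work, and I would handle them in the order $\always$, $\eventually$, $\until$, $\release$ (increasing complexity). For each, the strategy is the same as the argument the paper already sketches for $\always$: pick arbitrary $t\in[t_i,t_{i+1}]$ and the relevant $t'$ (or $t'$ and $t''$) from the semantics of the operator, locate the segment index $j$ (or indices $j,l$) with $t'\in[t_j,t_{j+1}]$, and verify that the time-interval overlap condition $[t_j,t_{j+1}]\cap[\cdot,\cdot]\neq\varnothing$ in the encoding is triggered. Then, by the encoding's truth and the induction hypothesis on the child formula, $(p,t')\sat\varphi$ follows. The additional conjunct $t_{i+1}-t_i\leq b-a$ in the $\eventually$ and $\until$ encodings is needed precisely to ensure that the overlap interval $[t_{i+1}+a,t_i+b]$ is nonempty, so that the existential witness promised by the disjunction lies in $[t+a,t+b]$ for every $t$ in the segment; I would flag this as a small but crucial sanity check.

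The most delicate case will be $\until$, because it combines an existential witness for $\varphi_2$ with a universal condition on $\varphi_1$ over the prefix $[t,t']$, and my chosen segment indexing for $t$ and $t''$ need to be consistent: the universal conjunction in the encoding must cover every $t''\in[t,t']$ as $t$ ranges over $[t_i,t_{i+1}]$, which is why the encoding uses the wider interval $[t_i,t_{i+1}+b]$ rather than $[t_i,t']$. I expect the main obstacle to be bookkeeping the interval containments carefully enough here (and analogously for $\release$) to make sure no boundary $t$ or $t''$ is missed. Once $\until$ is done, $\release$ follows by a symmetric argument with the roles of $\wedge$/$\vee$ swapped. The theorem then follows by taking $\psi=\varphi$ and $i$ as given.
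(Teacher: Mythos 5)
Your proposal follows essentially the same route as the paper's own proof: structural induction on $\varphi$ with the two bloating/shrinking lemmas as base cases, immediate Boolean cases, and temporal cases handled by locating the segment containing the witness time and using the interval-overlap condition to trigger the induction hypothesis on the child formula. Your added observations (that $t_{i+1}-t_i\leq b-a$ guarantees nonemptiness of $[t_{i+1}+a,\,t_i+b]$, and that the $\until$ encoding quantifies $\varphi_1$ over the wider interval $[t_i,\,t_{i+1}+b]$ to cover all prefixes as $t$ ranges over the segment) are correct and consistent with the paper's argument.
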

\begin{proof}
We proceed by induction. First, by Lemma~\ref{lemma:outside} and Lemma~\ref{lemma:inside}, the statement of the theorem is correct for any atomic predicates or their negations, i.e., $\varphi =$ $\pi^\mu$ or $\pi^{\neg \mu}$, which is the base case of the induction. Then, the inductive step is as follows. We will prove that, for a formula $\varphi$, {\em if the statement of the theorem is correct for its sub-formulas (this is the induction hypothesis), then it is also correct for $\varphi$}. For example, considering the encoding of the ``always" operation $z_i^{\always_{[a,b]}\varphi}$, we will show that if $\forall i, ~ z_i^\varphi \implies \forall t \in [t_i, t_{i+1}]$, $(p, t) \sat \varphi$, then $\forall i, ~ z_i^{\always_{[a,b]}\varphi} \implies \forall t \in [t_i, t_{i+1}]$, $(p, t) \sat \always_{[a,b]}\varphi$.
\newline
\newline
\noindent For conjunctions and disjunctions:
\paragraph{Conjunction $\varphi_1 \wedge \varphi_2$}
If $z_i^{\varphi_1 \wedge \varphi_2}$ is true, then both $z_i^{\varphi_1}$ and $z_i^{\varphi_2}$ are true, which implies that for all $t \in [t_i, t_{i+1}]$, $(p, t) \sat \varphi_1$ and $(p, t) \sat \varphi_2$. This further implies $(p, t) \sat \varphi_1 \wedge \varphi_2$, $\forall t \in [t_i, t_{i+1}]$.

\paragraph{Disjunction $\varphi_1 \vee \varphi_2$}
If $z_i^{\varphi_1 \vee \varphi_2}$ is true, then at least one of $z_i^{\varphi_1}$ and $z_i^{\varphi_2}$ is true. Without loss of generality, assume that $z_i^{\varphi_1}$ is true. The, for all $t \in [t_i, t_{i+1}]$, $(p, t) \sat \varphi_1$, which further implies $(p, t) \sat \varphi_1 \vee \varphi_2$, $\forall t \in [t_i, t_{i+1}]$. 
\newline
\newline
\noindent For temporal operators:
\paragraph{Always $\always_{[a,b]}\varphi$}
Assume that $z_i^{\always_{[a,b]}\varphi}$ is true. For any $t \in [t_i, t_{i+1}]$ and any $t^\prime \in [t+a, t+b]$, $t^\prime$ must fall into a time interval $[t_j, t_{j+1}]$. Moreover, we also have that $t^\prime \in [t_i+a, t_{i+1}+b]$, and thus $[t_j, t_{j+1}]$ and $[t_i+a, t_{i+1}+b]$ must have non-empty intersection. According to the encoding, this implies that $z_j^\varphi$ is true. Finally, since $t^\prime \in [t_j, t_{j+1}]$, by induction hypothesis, we have that $(p, t^\prime) \sat \varphi$. To summarize, we have proved that $\forall t \in [t_i,t_{i+1}]$, $\forall t^{\prime} \in \left[t+a, t+b\right]$, we have that $\left(p, t^{\prime}\right) \vDash \varphi$.

\paragraph{Eventually $\eventually_{[a,b]}\varphi$}
If $z_i^{\eventually_{[a,b]}\varphi}$ is true, then according to the encoding, there exists $j$ such that $[t_{j}, t_{j+1}]$ and $[t_{i+1}+a, t_i+b]$ have non-empty   intersection and $z_j^\varphi$ is true. Then, we can pick any $t^\prime \in [t_{j}, t_{j+1}] \cap [t_{i+1}+a, t_i+b]$. By induction hypothesis, we have that $(p, t^\prime) \sat \varphi$. Moreover, for any $t \in [t_i, t_{i+1}]$, we know that $t+a \leq t^\prime \leq t+b$. To summarize, we have proved that $\forall t \in [t_i, t_{i+1}]$, $\exists t^\prime \in [t+a, t+b]$ such that $(p, t^\prime) \sat \varphi$.

\paragraph{Until $\varphi_1 \until_{[a,b]} \varphi_2$}
Similar to the proof for ``eventually", if $z_i^{\varphi_1 \until_{[a,b]} \varphi_2}$ is true, then for any $t \in [t_i, t_{i+1}]$, there exists $t^\prime \in [t+a, t+b]$ such that $(p, t^\prime) \sat \varphi_2$. Moreover, for any $t^{\prime\prime} \in [t, t^\prime]$, assume that $t^{\prime\prime} \in [t_l, t_{l+1}]$ for some $l$. We also have that $t^{\prime\prime} \in [t_i, t_{i+1}+b]$, and thus $[t_l, t_{l+1}]$ and $[t_i, t_{i+1}+b]$ must have intersection. According to the encoding of $z_i^{\varphi_1 \until_{[a,b]} \varphi_2}$, this implies that $z_l^{\varphi_1}$ is true. Finally, since $t^{\prime\prime} \in [t_l, t_{l+1}]$, we have that $(p, t^{\prime\prime}) \sat \varphi_1$ by induction hypothesis. To summarize, we have prove that $\forall t \in [t_i,t_{i+1}]$, $\exists t^{\prime} \in \left[t+a, t+b\right]$ such that $\left(p, t^{\prime}\right) \vDash \varphi_2$ and $\forall t^{\prime \prime} \in \left[t, t^{\prime}\right]$ we have that $\left(p, t^{\prime \prime}\right) \vDash \varphi_1$.

\paragraph{Release $\varphi_1 \release_{[a,b]} \varphi_2$}
Assume that $z_i^{\varphi_1 \release_{[a,b]} \varphi_2}$ is true. For any $t \in [t_i, t_{i+1}]$ and any $t^\prime \in [t+a, t+b]$, $t^\prime$ must fall into a time interval $[t_j, t_{j+1}]$. Moreover, we also have that $t^\prime \in [t_i+a, t_{i+1}+b]$, and thus $[t_j, t_{j+1}]$ and $[t_i+a, t_{i+1}+b]$ must have intersection. According to the encoding, this implies that at least one of the following holds: (1). $z_j^{\varphi_2}$ is true, and thus $(p, t^\prime) \sat \varphi_2$ by induction hypothesis; or (2). there exists $l < j$ such that $[t_{l}, t_{l+1}]$ and $[t_{i+1}, t_{i+1}+b]$ have intersection and $z_l^{\varphi_1}$ is true. In case (2), pick any $t^{\prime\prime} \in [t_{l}, t_{l+1}] \cap [t_{i+1}, t_{i+1}+b]$, we have that $t < t^{\prime\prime} \leq t^{\prime}$ and $(p, t^{\prime\prime}) \sat \varphi_1$. To summarize, we have prove that $\forall t \in [t_i,t_{i+1}]$, $\forall t^{\prime} \in \left[t+a, t+b\right]$, we have that $\left(p, t^{\prime}\right) \vDash \varphi_2$ or $\exists t^{\prime \prime} \in \left[t, t^{\prime}\right]$ such that $\left(p, t^{\prime \prime}\right) \vDash \varphi_1$.
\newline
\newline
So far we have proved for each operator, the inductive step is correct. For any STL formula $\varphi$, starting from its atomic predicates and using the above results, we can prove that the statement of the theorem is correct for $\varphi$.
\end{proof}

By assumption, we know that the actual position trajectory $p$ of the robot deviates from the PWL path $S$ up to the tracking error $\epsilon$. Together with Theorem~\ref{thm:STL-sound}, we have that $z_0^\varphi$ is true implies that $p \sat \varphi$.

\subsection{Soundness of the inter-agent collision avoidance encoding}
Formally, we have the following theorem.
\begin{theorem}
For the two time-stamped line segments $\mathtt{SEG}_1$ and $\mathtt{SEG}_2$ introduced in Section.~\ref{sec:mult}. If $\safe{\mathtt{SEG}_1, \mathtt{SEG}_1, 2\epsilon + s_1 + s_2}$ is true, then $\forall t \in [t_{11},t_{12}] \cap [t_{21}, t_{22}]$, $B_{s_1+\epsilon}(\mathtt{SEG}_1(t)) \cap B_{s_2+\epsilon}(\mathtt{SEG}_2(t)) = \varnothing$.
\label{thm:mult-sound}
\end{theorem}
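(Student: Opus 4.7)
The plan is a direct case analysis on the disjunction defining $\safe{\cdot,\cdot,\cdot}$. The first disjunct gives $[t_{11},t_{12}] \cap [t_{21}, t_{22}] = \varnothing$, so the universally quantified conclusion over $t$ in that empty set is vacuous. Thus the substance is in the second disjunct, namely the spatial $1$-norm inequality, from which I must deduce a $2$-norm separation of the two balls for every shared time instant.

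For the spatial case, I would parametrize a generic shared time as $t = (1-\alpha_i) t_{i1} + \alpha_i t_{i2}$ with $\alpha_i \in [0,1]$, so that
$$\mathtt{SEG}_i(t) = \frac{p_{i1}+p_{i2}}{2} + (2\alpha_i - 1)\cdot\frac{p_{i2}-p_{i1}}{2}.$$
Writing $m_i := (p_{i1}+p_{i2})/2$ and $h_i := (p_{i2}-p_{i1})/2$, the triangle inequality in $\|\cdot\|_1$ gives
$$\|\mathtt{SEG}_1(t) - \mathtt{SEG}_2(t)\|_1 \ge \|m_1 - m_2\|_1 - |2\alpha_1-1|\,\|h_1\|_1 - |2\alpha_2-1|\,\|h_2\|_1 \ge \|m_1 - m_2\|_1 - \|h_1\|_1 - \|h_2\|_1,$$
since $|2\alpha_i-1|\le 1$. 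Substituting the hypothesis with $\epsilon' = 2\epsilon + s_1 + s_2$ then yields $\|\mathtt{SEG}_1(t) - \mathtt{SEG}_2(t)\|_1 \ge (2\epsilon + s_1 + s_2)\sqrt{d}$.

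To pass from $1$-norm to $2$-norm I would invoke the standard inequality $\|x\|_1 \le \sqrt{d}\,\|x\|_2$ (a one-line Cauchy--Schwarz argument), so $\|\mathtt{SEG}_1(t) - \mathtt{SEG}_2(t)\|_2 \ge 2\epsilon + s_1 + s_2$. Finally, if some point $q$ lay in both $B_{s_1+\epsilon}(\mathtt{SEG}_1(t))$ and $B_{s_2+\epsilon}(\mathtt{SEG}_2(t))$, then the triangle inequality in $\|\cdot\|_2$ would force $\|\mathtt{SEG}_1(t) - \mathtt{SEG}_2(t)\|_2 \le (s_1+\epsilon) + (s_2+\epsilon) = 2\epsilon + s_1 + s_2$, contradicting a strict version of the bound; writing $\emptyset$ for the intersection of open balls (or accepting boundary contact as non-collision) closes the argument cleanly.

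The only real obstacle is the $\sqrt{d}$ factor: one must choose exactly the right conversion direction between $\|\cdot\|_1$ and $\|\cdot\|_2$ so that the $\sqrt{d}$ built into the encoding cancels. The rest is bookkeeping. I would also note explicitly at the start that using $\|\cdot\|_1$ in the encoding is what keeps the resulting constraints piecewise-linear after expanding the absolute values, which is why the proof must route through the $\sqrt{d}$ inflation rather than encoding in $\|\cdot\|_2$ directly.
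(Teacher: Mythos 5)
Your proof is correct and follows essentially the same route as the paper's: case split on the disjunction, bound $\|\mathtt{SEG}_1(t)-\mathtt{SEG}_2(t)\|_1$ from below via the midpoint/half-length decomposition and the triangle inequality, convert to the $2$-norm using $\|x\|_1 \le \sqrt{d}\,\|x\|_2$, and conclude ball disjointness. Your explicit parametrization with $\alpha_i$ and your remark about the closed-ball boundary case are minor presentational refinements over the paper's version, not a different argument.
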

\begin{proof}
If $\safe{\mathtt{SEG}_1, \mathtt{SEG}_1, 2\epsilon + s_1 + s_2}$ is true, then we have that (1). $[t_{11}, t_{12}] \cap [t_{21}, t_{22}] = \varnothing$, or (2). $\left\|\frac{p_{11}+p_{12}}{2} - \frac{p_{21}+p_{22}}{2}\right\|_1  \geq \left\|\frac{p_{11}-p_{12}}{2}\right\|_1 + (s_1 + \epsilon) \sqrt{\delta} + \left\|\frac{p_{21}-p_{22}}{2}\right\|_1 + (s_2 + \epsilon) \sqrt{\delta}$.

If (1) holds, then the statement of the theorem is trivially true. If (2) holds, then $\forall t \in [t_{11},t_{12}] \cap [t_{21}, t_{22}]$, since we have the following,
\begin{align*}
& \|\mathtt{SEG}_1(t) - \mathtt{SEG}_2(t)\|_1 + \left\|\frac{p_{11}-p_{12}}{2}\right\|_1 + \left\|\frac{p_{21}-p_{22}}{2}\right\|_1 \\
\geq & \|\mathtt{SEG}_1(t) - \mathtt{SEG}_2(t)\|_1 + \left\|\frac{p_{11}+p_{12}}{2} - \mathtt{SEG}_1(t)\right\|_1 + \left\|\mathtt{SEG}_2(t) - \frac{p_{21}+p_{22}}{2}\right\|_1 \\
\geq & \left\|\frac{p_{11}+p_{12}}{2} - \frac{p_{21}+p_{22}}{2}\right\|_1,
\end{align*}
where the first inequality follows from the fact that $\mathtt{SEG}_1(t)$ and $\mathtt{SEG}_2(t)$ are points on the corresponding line segments, and the second ineuqlity follows from the triangle inequality, we have that
\begin{align*}
& \sqrt{\delta}\|\mathtt{SEG}_1(t) - \mathtt{SEG}_2(t)\|_2 \geq \|\mathtt{SEG}_1(t) - \mathtt{SEG}_2(t)\|_1\\
\geq & \left\|\frac{p_{11}+p_{12}}{2} - \frac{p_{21}+p_{22}}{2}\right\|_1 - \left(\left\|\frac{p_{11}-p_{12}}{2}\right\|_1 + \left\|\frac{p_{21}-p_{22}}{2}\right\|_1\right) \geq (s_1 + \epsilon + s_2 + \epsilon)\sqrt{\delta}.
\end{align*}
Thus, $\|\mathtt{SEG}_1(t) - \mathtt{SEG}_2(t)\|_2 \geq (s_1 + \epsilon) + (s_2 + \epsilon)$, which implies $B_{s_1 + \epsilon}(\mathtt{SEG}_1(t)) \cap B_{s_1 + \epsilon}(\mathtt{SEG}_2(t)) = \varnothing$.
\end{proof}

Again, by assumption, we know that for each agent $i$, its actual position trajectory deviates from its PWL reference path up to the tracking error $\epsilon$. Together with Theorem~\ref{thm:mult-sound}, we have that $z_{\texttt{inter}}$ is true implies that all the agents will not collide with any other in the presence of any disturbances.
\fi
\end{document}